\title{Reinforcement Learning with Function Approximation: From Linear to Nonlinear}
\DeclareMathOperator*{\argmin}{arg\,min} 
\DeclareMathOperator*{\argmax}{arg\,max}
\let\inf\relax \DeclareMathOperator*\inf{\vphantom{p}inf}
\newtheorem{defn}{Definition}
\newtheorem{thm}{Theorem}
\newcommand{\bR}{\mathbb {R}}
\newcommand{\bH}{H}
\newcommand{\bP}{\mathbb{P}}
\newcommand{\bS}{\mathcal{S}}
\newcommand{\bA}{\mathcal{A}}
\newcommand{\EE}{\mathbb{E}}
\newcommand{\cond}{\,|\,}
\newcommand{\transpose}{^{\operatorname{T}}}
\newcommand{\rmd}{\,\mathrm{d}}
\date{\today}
\author{
 Jihao Long\thanks{Princeton University}
 \and Jiequn Han\thanks{Flatiron Institute}
}
 \let\Ginclude@graphics\@org@Ginclude@graphics
\begin{document}

\maketitle

\begin{abstract}
Function approximation has been an indispensable component in modern reinforcement learning algorithms designed to tackle problems with large state spaces in high dimensions. This paper reviews recent results on error analysis for these reinforcement learning algorithms in linear or nonlinear approximation settings, emphasizing approximation error and estimation error/sample complexity. We discuss various properties related to approximation error and present concrete conditions on transition probability and reward function under which these properties hold true. Sample complexity analysis in reinforcement learning is more complicated than in supervised learning, primarily due to the distribution mismatch phenomenon. With assumptions on the linear structure of the problem, numerous algorithms in the literature achieve polynomial sample complexity with respect to the number of features, episode length, and accuracy, although the minimax rate has not been achieved yet. These results rely on the $L^\infty$ and UCB estimation of estimation error, which can handle the distribution mismatch phenomenon. The problem and analysis become substantially more challenging in the setting of nonlinear function approximation, as both $L^\infty$ and UCB estimation are inadequate for bounding the error with a favorable rate in high dimensions. We discuss additional assumptions necessary to address the distribution mismatch and derive meaningful results for nonlinear RL problems.
\end{abstract}

\begin{keywords}%
  Reinforcement learning, Function approximation, High-dimensionality analysis, Distribution mismatch
\end{keywords}

\section{Introduction}
\label{sec_intro}
Reinforcement learning (RL) studies how an agent can learn, through interaction with the environment, an optimal policy that maximizes his/her long-term reward~\cite{sutton2018reinforcement}. When the problem involves a finite set of states and actions of moderate size, the corresponding value or policy functions can be represented precisely as a table, which is called the tabular setting. 
However, when the problem contains an enormous number of states or continuous states, often in high dimensions, function approximation must be introduced to approximate the involved value or policy functions. With the rapid development of machine learning techniques for function approximation, modern reinforcement learning (RL) algorithms increasingly rely on function approximation tools to tackle problems with growing complexity, including video games \cite{mnih2013playing}, Go \cite{silver2016mastering}, and robotics \cite{kober2013reinforcement}.

Despite the astonishing practical success of RL with function approximation when applied to challenging high-dimensional problems, the theoretical understanding of RL algorithms with function approximation remains relatively limited, particularly when compared to the theoretical results in the tabular setting.
In the tabular setting, roughly speaking, we can achieve the minimax sample complexity up to the logarithm term: we need samples of the order of $\frac{H^3|\bS||\bA|}{\epsilon^2}$ to obtain an $\epsilon$-optimal policy, where $H$ denotes the episode length, $|\bS|$ and $|\bA|$ denote the size of the state space and action space (see \cite{gheshlaghi2013minimax,azar2017minimax,dann2017unifying} for detailed discussions). Apparently, these kinds of results become vacuous when $|\bS|$ (and/or $|\bA|$) is extremely large or infinite. Therefore, the study of sample complexity in the presence of function approximation has received considerable attention in recent years in the RL community. Relatively simple function approximation methods, such as the linear model in \cite{yang2019sample,jin2020provably} or generalized linear model in \cite{wang2019optimism,li2021sample} have been examined in the context of RL algorithms. Meanwhile, nonlinear forms like kernel approximation~\cite{domingues2020regret,yang2020provably,yang2020function,long20212,long2022perturbational} have also been studied in RL problems to further bridge the gap between theoretical results under restrictive assumptions and practice. 

In this paper, we review recent theoretical results in RL with function approximation, from linear setting to nonlinear setting. 
We mainly focus on the results regarding approximation error and estimation error/sample complexity, which are errors introduced by function approximation and finite datasets, respectively.
We first review the basic concepts of RL in Section \ref{sec: MDP} and introduce two categories of RL algorithms: value-based methods and policy-based methods in Section \ref{sec:alg}. We are interested in these algorithms when combined with function approximation. In Section \ref{Sec:general_frame}, we give a general framework for the theoretical analysis of RL with function approximation. We adopt the concepts of approximation error, estimation error, and optimization error from supervised learning to RL and discuss the crucial challenges of analyzing these errors in RL. In Section \ref{sec:linear}, we introduce RL algorithms with linear function approximation, as it is the simplest function approximation. We introduce the basic linear MDP assumption~\cite{jin2020provably}, which assumes that both reward function and transition probability are linear with respect to $d$ known features. Under this or similar assumptions, the Q-value function can be represented as a linear function with respect to the features, and numerous algorithms in the literature can achieve polynomial sample complexity with respect to the number of features $d$, episode length $H$, and accuracy $\epsilon$. However, the minimax sample complexity has not been achieved yet.

In Section \ref{sec:nonlinear}, we further discuss RL with nonlinear function approximation. We first introduce the theoretical results of supervised learning on reproducing kernel Hilbert space, neural tangent kernel, and Barron space, and then discuss how to analyze the approximation error in RL problems with nonlinear function approximation. We then focus on the distribution mismatch phenomenon, which is a crucial challenge of RL compared to supervised learning when analyzing the estimation error in the presence of function approximation. In tabular and linear settings, the distribution mismatch is handled by the $L^\infty$ and UCB estimation. However, as we will point out, both $L^\infty$ and UCB estimation suffer from the curse of dimensionality for various function spaces, including neural tangent kernel, Barron space, and many common reproducing kernel Hilbert spaces. This challenge reveals an essential difficulty of RL problems with nonlinear function approximation, and thus additional assumptions are needed to derive meaningful results for nonlinear RL in the literature, including assumptions on the fast eigenvalue decay of the kernel and assumptions on the finite concentration coefficient. We finally introduce the perturbational complexity by distribution mismatch in \cite{long2022perturbational}, which quantifies the difficulty of a large class of the RL problems in the nonlinear setting, as it can give both lower bound and upper bound of the sample complexity of these RL problems. Directions for future work are discussed in Section~\ref{sec:conclusion}.

\paragraph{Notations.} Given $\mathcal{X}$ as an arbitrary subset of Euclidean space, we use $C(\mathcal{X})$ to denote the bounded continuous function space on $\mathcal{X}$, and $\mathcal{P}(\mathcal{X})$ to denote the probability distribution space on $\mathcal{X}$. For any random variable $X$, we use $\mathcal{L}(X)$ to denote its law.
Given a positive integer $H$, $[H]$ denotes the set $\{1,\dots,H\}$. $\mathrm{I}_d$ denotes the identity matrix of size $d$. The notation $\tilde{O}(\cdot)$ ignores poly-logarithmic factors.

\section{Preliminary}\label{sec: MDP}

\subsection{Markov Decision Processes}
Throughout this article, otherwise explicitly stated, we mainly focus on the finite-horizon Markov decision process (MDP) $M = (\bS,\bA,H,P,r,\mu)$ with general time-inhomogeneity as the mathematical model for the RL problem. The specifications are the following.
\begin{itemize}
    \item $\bS$ is the state space and we assume $\bS$ is a subset of a Euclidean space.
    \item $\bA$ is the action space and we assume $\bA$ is a compact subset of a Euclidean space.
    \item $H$ is the length of each episode.
    \item $P\colon [\bH]\times\bS\times\bA \mapsto \mathcal{P}(\bS)$ is the state transition probability. For each $(h,s,a) \in [\bH]\times\bS\times\bA$. $P(\,\cdot\cond h,s,a)$ denotes the transition probability for the next state at step $h$ if the current state is $s$ and action $a$ is taken.
    \item  $r\colon [\bH] \times \bS \times\bA \mapsto \bR$ is the reward function, denoting the  reward at step $h$ if we choose action $a$ at the state $s$. Unless explicitly stated, we assume $r$ is deterministic and the range of $r$ is a subset of $[0,1]$. We also assume that $r$ is a continuous function.
    \item $\mu \in \mathcal{P}(\bS)$ is the initial distribution.
\end{itemize}
We denote a policy by $\pi = \{\pi_h\}_{h=1}^H \in \mathcal{P}(\bA\cond\bS,H)$, where
\begin{align}\label{def_col_distribution}
    \mathcal{P}(\bA\cond\bS,H) = \Big\{\{\pi_h(\,\cdot \cond \cdot\,)\}_{h=1}^H\colon \pi_h(\,\cdot \cond s) \in \mathcal{P}(\bA)
    \text{ for any }s \in \bS \text{ and } h \in [H]\Big\}.
\end{align}

In some cases, we need to work with time-homogeneous, infinite-horizon MDP $M = (\bS,\bA,\gamma, P,r,\mu)$, where
\begin{itemize}
    \item $\bS,\bA$ and $\mu$ are the same with the finite-horizon case,
    \item $\gamma \in [0,1)$ is the discount factor,
    \item $P:\bS \times \bA \mapsto \mathcal{P}(\bS)$ is the state transition probability,
    \item $r:\bS\times \bA \mapsto \bR$ is the deterministic reward function.
\end{itemize}

\subsection{Total Reward, Value Function and Bellman Equation}
Given an MDP $M$ and a policy $\pi$, the agent's total reward is defined according to the following interaction protocol with the MDP.
The agent starts at an initial states $S_0 \sim \mu$; at each time step $h \in [H]$ the agent takes action $A_h \sim \pi_h(\,\cdot \cond S_h)$, obtains the reward  $r(h,S_h,A_h)$ and observes the next state $S_{h+1}\sim P(\,\cdot\cond h, S_h,A_h)$. In this way, we generate a trajectory $(S_0,A_0,\dots,S_H,A_H)$ and we will use $\bP_{M,\pi}$ and $\EE_{M,\pi}$ to denote the probability and expectation of the trajectory generated by policy $\pi$ on the MDP $M$. The expected total reward under policy $\pi$ is defined by
\begin{equation}
    J(M,\pi) = \EE_{M,\pi}[\sum_{h=1}^H r(h,S_h,A_h)],
\end{equation}
and our goal is to find a policy $\pi$ to maximize $J(M,\pi)$ for a fixed MDP $M$. We will use 
\begin{equation}
    J^*(M) = \sup_{\pi \in \mathcal{P}(\bA|\bS,H)} J(M,\pi)
\end{equation}
to denote the optimal value.
For ease of notation in analysis, we will use
$\rho_{h,P,\pi,\mu}$ to denote the distribution of $(S_h,A_h)$ under transition $P$, policy $\pi$ and initial distribution $\mu$.  Moreover, we use $\Pi(h,P,\mu)$ to denote the set of all the possible distributions of $\rho_{h,P,\pi,\mu}$:
\begin{equation*}
    \Pi(h,P,\mu) = \{\rho_{h,P,\pi,\mu}\colon \pi \in \mathcal{P}(\bA \cond \bS,H)\},
\end{equation*}
and let
\begin{equation*}
    \Pi(P,\mu) = \bigcup_{h \in [H]}\Pi(h,P,\mu).
\end{equation*}

Given an MDP $M$, a policy $\pi$, a state $s \in \bS$, an action $a \in \bA$ and time step $h \in [H]$, we define the value function and Q-value function as follows:
\begin{align}
    &V_h^\pi(s) = \EE_{M,\pi}[\sum_{h'=h}^H r(h',S_h',A_h')\cond S_h = s],\\
    &Q_h^\pi(s,a) = \EE_{M,\pi}[\sum_{h'=h}^H r(h',S_h',A_h')\cond S_h' = s, A_h' = a],
\end{align}
as the expected cumulative reward of the MDP starting from step $h$. We have the following Bellman equation:
\begin{align}
    &Q_h^\pi(s,a) = r(h,s,a) + \EE_{s'\sim P(\,\cdot\cond h,s,a)}[V_{h+1}^\pi(s')],\\
    &V_h^\pi(s) = \int_{\bA} Q_h^\pi(s,a)\rmd \pi_h(a|s),
\end{align}
where we define $V_{H+1}^\pi = 0$. The optimal value function and the optimal Q-value function are defined by
\begin{align}
    &V_h^*(s) = \sup_{\pi \in \Pi(\bS,\bA,H)} V_h^\pi(s),\\
    &Q_h^*(s,a) = \sup_{\pi \in \Pi(\bS,\bA,H)} Q_h^\pi(s,a).
\end{align}
The famous Bellman optimality equation gives,
\begin{align}\label{bellman_optimal}
\begin{split}
    & Q_h^*(s,a) = r(h,s,a) + \EE_{s'\sim P(\,\cdot\cond h,s,a)}[V^*_{h+1}(s')],\\
    & V_h^\pi(s) = \max_{a \in \bA}Q_h(s,a). 
\end{split}
\end{align}
where  we again define $V_{H+1}^* = 0$.
We will use $\pi^*$ to denote an optimal policy of $Q_h^*$ satisfying the following greedy condition
\begin{equation}
    \textrm{supp}(\pi_h^*(\,\cdot\cond s)) \subset \{a\in\bA: Q_h^*(s,a) = V_h^*(s)\}
\end{equation}
for any $(h,s,a) \in [H]\times \bS\times \bA$. The optimal policy $\pi^*$ satisfies that $V_h^{\pi^*} = V_h^*$ and $Q_h^{\pi^*} = Q_h^*$. We refer readers to \cite{puterman2014markov} for an in-depth discussion of the value function, Bellman equation, and optimal policy. 

In the case of the time-homogenous MDP, our goal is to find a policy to maximize the discounted total reward
\begin{equation}
    \EE_{M,\pi}[\sum_{h=1}^{\infty}\gamma^{h-1}r(S_h,A_h)].
\end{equation}
We can then introduce the value function, Bellman equation, and optimal policy similarly as in the time-inhomogeneous case, and we refer \cite{puterman2014markov} for details.

\subsection{Simulator Models}
In RL problems, the exact form of the transition probability $P$ and reward function $r$ is unknown, and we can only interact with the MDP to obtain a near-optimal policy. Obviously, the sample complexity of an algorithm depends on the way in which we are allowed to interact with the MDP (the interaction can be different from the interaction according to which the total reward is defined, as described above). There are two main types of simulators for the MDP, which specify the allowed interaction: the generative model setting and the episodic setting. We describe these settings below, and the results in both settings will be reviewed in this paper.

\paragraph{Generative model setting.}  In the generative model setting, one can take any time-state-action tuple $(h,s,a)$ as the input of the simulator and obtain a sample $s' \sim P(\,\cdot\cond h,s,a)$ and the reward $r(h,s,a)$. In this sense, the MDP simulator works as a general generative model.

\paragraph{Episodic setting.} The episodic setting is a more restrictive scenario compared to the generative model setting. In the episodic setting, one can only decide an initial state $s$ and the action on each time step to obtain from the simulator a trajectory $(S_1,A_1,\dots, S_H,A_H)$ and the rewards on the trajectory $r(1,S_1,A_1),\dots, r(H,S_H,A_H)$. In the episodic setting, it is common to consider the \textit{regret}: the cumulative difference between the obtained reward and the optimal reward over $K$ episodes:
\begin{equation}
    \textrm{Regret}(K) = \sum_{k=1}^K[J^*(M) - J(M,\pi_k)]
\end{equation}
for any given policy sequence $(\pi_1,\dots,\pi_K)$ and we aim to minimize the regret over $K$ episodes. If we define a random policy $\bar{\pi}$, which uniformly chooses a policy among $\pi_1,\dots,\pi_K$ and apply it to the MDP, then
\begin{equation}
   J^*(M) - J(M,\bar{\pi}) = \frac{1}{K} \textrm{Regret}(K).
\end{equation}
Therefore, a policy sequence with low regret can generate a near-optimal policy.

\section{RL Algorithms with Function Approximation}\label{sec:alg}
In this section, we introduce some typical RL algorithms with function approximation. They can be divided into two categories:  valued-based methods and policy-based methods.

\subsection{Value-Based Method}\label{value_base_method}
Value-based methods approximate the value or Q-value functions and use the Bellman optimality equation~\eqref{bellman_optimal} to learn the optimal value or Q-value functions. The near-optimal policy can then be obtained through the greedy policy with respect to the optimal Q-value function.

If we have access to a generative model, one typical value-based algorithm with function approximation is the fitted Q-iteration algorithm~\cite{munos2005error,antos2008learning,chen2019information,fan2020theoretical}. Its main idea is as follows: noticing that the conditional expectation minimizes the $L^2$-loss corresponding to the Bellman optimality equation \eqref{bellman_optimal}, we know that for any function space $\mathcal{F}$ such that $Q_h^*\in \mathcal{F}$, $Q_h^*$ is the minimizer of the following optimization problem:
\begin{equation}\label{argmin property}
     \min_{f \in \mathcal{F}}\,\EE_{(s,a)\sim \mu,s'\sim\mathbb{P}_h(\cdot\cond s,a)}|f(s,a) -r(h,s,a) -  V_{h+1}^{*}(s')|^2.
\end{equation}
Therefore, in the fitted Q-iteration algorithm, we compute the $Q_h^*$ backwardly through the Bellman optimality equation \eqref{bellman_optimal}. At each step $h$, we choose $n$ state-action pairs $\{(S_h^i,A_h^i)\}_{1\le i \le n}$, submit the queries $\{(S_h^i,A_h^i,h)\}_{1\le i \le n}$ to the generative model, and obtain the reward and next state $\{(r_h^i,\hat{S}_{h+1}^i)\}_{1\le i \le n}$. 
We solve an empirical version of the least-square problem \eqref{argmin property} for $h = H, H-1,\dots,1$ backwardly. The pseudocode of the fitted Q-iteration is presented in Algorithm \ref{alg_fqi}.
We comment that the performance of the fitted Q-iteration algorithm relies on the samples of state-action pairs $\{(S_h^i,A_h^i)\}_{1\le i\le n}$: we hope these samples are representative enough among those encountered under the optimal policy. 

\begin{algorithm}[ht]
\caption{Fitted Q-iteration algorithm}
\KwIn{MDP ($\bS,\bA,\bH,P,r, \mu$) , function classes $\{\mathcal{F}_h\}_{h=1}^H$, regularization terms $\{\Lambda_h\}_{h=1}^H$, number of samples $n$, state-action pairs $\{(S_h^i,A_h^i)\}_{h \in [H],i \in [n]}$.\\
\textbf{Initialize:} $Q_{H+1}(x,a) = 0$ for any $(x,a) \in \bS \times \bA$. }

\For{$h = H,H-1,\dots,1$}{
Send $(S_h^1, A_h^1, h),\dots,(S_h^n,A_h^n,h)$ to the generative model
and obtain the rewards and next states $(r_h^1,\hat{S}_{h+1}^1), \dots,(r_h^n,\hat{S}_{h+1}^n)$ for all the state-action pairs

Compute $y_h^i = r_h^i + \max_{a \in \bA} Q_{h+1}(\hat{S}_{h+1}^i,a)$

Compute $\hat{Q}_h$ as the minimizer of the optimization problem
\begin{equation}\label{optimization:1}
   \min_{f \in \mathcal{F}_h}\frac{1}{n}\sum_{i=1}^n|y_h^i - f(S_h^i,A_h^i)|^2 + \Lambda_h(f) \;
\end{equation}

Set $Q_h = \max\{0,\min\{\hat{Q}_h,H\}\}$.
}
\KwOut{$\hat{\pi}$ as the greedy policies with respect to $\{Q_h\}_{h=1}^H$.}
\label{alg_fqi}
\end{algorithm}

In contrast to the generative model setting where we can directly choose arbitrary state-action pairs to query at each step, when we work in the episodic setting, we need to decide how to take action on each step in order to balance the trade-off between exploitation and exploration. Exploitation concerns taking actions with large estimated Q-values in order to obtain large rewards, while exploration concerns taking actions with high uncertainty in their Q-values in order to obtain more accurate estimates. On the one hand, taking actions with large estimated Q-values can lead to large estimated total rewards in this episode, but it can also prevent the agent from discovering better actions. On the other hand, taking actions with high uncertainty can lead to more accurate Q-value estimates, but it may also result in lower total rewards in this episode. Finding the proper balance between exploitation and exploration is an important challenge in RL.
The upper confidence bound (UCB) method is a common approach used in reinforcement learning to balance such a trade-off. In the UCB method, a bonus function is added to the estimated Q-value function to reflect the uncertainty in the Q-value estimates. The action that is chosen is the one with the highest sum of the estimated Q-value and the bonus. Since the estimated Q-value and the bonus reflect exploitation and exploration, respectively, the UCB method is widely used in reinforcement learning algorithms and often achieves good performance. The UCB method allows the algorithm to balance the need for exploitation, in order to obtain large immediate rewards, with the need for exploration, in order to obtain more accurate Q-value estimates. We present the pseudocode of one typical such algorithm, the value iteration algorithm \cite{jin2020provably}, in Algorithm \ref{alg_vi}.

\begin{algorithm}[ht]
\caption{Value iteration with function approximation and bonus term}
\KwIn{MDP ($\bS,\bA,\bH,P,r,\mu$), number of episodes $K$, function classes $\{\mathcal{F}_h\}_{h=1}^H$, regularization terms $\{\Lambda_h\}_{h=1}^H$. \\
\textbf{Initialize:} $Q_h^1(s,a) = 0$ for any $(h,s,a) \in [H]\times \bS \times \bA$, $Q_{H+1}^k(s,a) = 0$ for any $(k,s,a) \in [K]\times \bS \times \bA$. }

\For{$k = 1,\dots, K$}{
Sample $S_1^k$ from the initial state distribution $\mu$.

\For{$h = 1,\dots ,H$}{
Take action $A_h^k  = \argmax_{a \in \bA} Q_h^k(S_h^k,A_h^k)$ and observe  the reward $r_h^k = r(h,S_h^k,A_h^k)$ and next state $S_{h+1}^k \sim P(\,\cdot\cond h,S_h^k,A_h^k)$.
}
\If{$k < K$}{
\For{$h = H,H-1,\dots,1$}{
Compute $\hat{Q}_h^{k+1}$ as the minimizer of the optimization problem
\begin{equation}\label{optimization:2}
   \min_{f \in \mathcal{F}_h}\left\{\frac{1}{n}\sum_{i=1}^k|r_h^i+\max_{a \in \bA} Q_{h+1}^{k+1}(S_{h+1}^i,a) - f(S_h^i,A_h^i)|^2 + \Lambda_h(f)\right\}. \;
\end{equation}

Compute the bonus function $b_h^{k+1}:\bS\times\bA \rightarrow [0,+\infty)$ based on $\{(S_h^i,A_h^i)\}_{i=1}^k$ and $\mathcal{F}_h$. Set $Q_{h+1}^{k+1} = \max\{0,\min\{ \hat{Q}_h^{k+1} + b_h^{k+1},H$\}\}. \;
}
}
}
\KwOut{$\hat{\pi}^k$ as the greedy policies with respect to $\{Q_h^k\}_{h=1}^H$ for $k = 1,\dots, K$.}
\label{alg_vi}
\end{algorithm}

\subsection{Policy-Based Method}\label{policy_based_method}
In policy-based methods, the policy function is approximated and optimized based on cumulative reward using stochastic gradient descent. A key step in this process is calculating the gradient with respect to the cumulative reward.
This relies on the policy gradient theorem~\cite{sutton1999policy}:
\begin{equation}
    \nabla_\theta J(\pi_\theta) = \EE_{M,\pi_\theta}[\sum_{h=1}^H\nabla_\theta \log \pi_\theta(A_h\cond S_h)\sum_{h' = h}^Tr(h',S_{h'},A_{h'})].
\end{equation}
Here we only state the naive method to compute the gradient. Several variants can be used to replace $\sum_{h' = h}^Tr(h',S_{h'},A_{h'})$ to reduce the variance, see \cite{schulman2015high} for a detailed discussion. Algorithm \ref{alg_pg} gives the pseudocode of the policy gradient method. Besides the vanilla policy gradient method, several variants of the policy gradient method have been proposed by adding various regularization terms to the cumulative reward as the target of the optimization, including the natural policy gradient \cite{kakade2001natural}, proximal policy optimization~\cite{schulman2017proximal}, and trust region policy optimization~\cite{schulman2015trust}.

\begin{algorithm}[ht]
\caption{Policy gradient method}
\KwIn{MDP ($\bS,\bA,\bH,P,r,\mu$), parametrization of policy $\pi_\theta$, initialization $\theta_0$, batch size $N$, iteration step $K$, learning rate $\eta$.\\
\textbf{Initialize:} Set $\theta = \theta_0$ }

\For{$k = 1,\dots, K$}{
Sample $N$ i.i.d. states $\{S_1^i\}_{i=1}^N$ from $\mu$ and collect $N$ trajectories $\{(S_1^i,A_1^i,\dots,S_H^i,A_H^i)\}_{i=1}^N$ using policy $\pi_\theta$.

Estimate the gradient
\begin{equation}
    g_k = \frac{1}{N}\sum_{i=1}^N\sum_{h=1}^H\nabla_\theta \log \pi_\theta(A_h^i\cond S_h^i)\sum_{h' = h}^Tr(h',S_{h'}^i,A_{h'}^i).
\end{equation}

Update $\theta \leftarrow \theta + \eta g_k$.
}
\KwOut{$\pi_\theta$.}
\label{alg_pg}
\end{algorithm}
\section{General Framework of Theoretical Analysis on RL with Function Approximation}\label{Sec:general_frame}
In this section, we will discuss how to distinguish and quantify different sources that affect the performance of RL algorithms that use function approximation. To begin, we will provide a brief overview of supervised learning and error decomposition in that context. Then, we will examine how to adapt these concepts for application in the analysis of RL algorithms with function approximation.

In supervised learning, the goal is to estimate the target function $f^*$ based on a finite training set
\begin{equation}
    \mathcal{D} = \{x_i,y_i \}_{i=1}^n,
\end{equation}
where $x_1,\dots,x_n$ are i.i.d. sampled from a fixed distribution $\mu$,
\begin{equation}
y_i = f^*(x_i)+\epsilon_i
\end{equation}
and the noises $\epsilon_1,\dots,\epsilon_n$ are i.i.d. standard normal distribution independent of $x_1,\dots,x_n$. We aim to find an estimator $\hat{f}$ with a small population loss
\begin{equation}
    \mathcal{R}(\hat{f}) =\EE_{x\sim \mu}|f^*(x)-\hat{f}(x)|^2.
\end{equation}

In a standard procedure of supervised learning, one first chooses a hypothesis space or a set of trial functions $\mathcal{H}_m = \{f(x;\theta): \theta \in \Theta_m\}$, where $\theta$ denotes the parameters and $m$ denotes the number of parameters in $\mathcal{H}_m$. Common choices of the hypothesis space include linear functions, kernel functions, and neural networks. The next step is to choose a loss function and formulate an optimization problem. The loss function is typically composed of the empirical loss $\mathcal{R}_n(\theta)$ and a regularization term $\Lambda(\theta)$:
\begin{equation}
    \mathcal{L}_n(\theta) = \mathcal{R}_n(\theta) + \Lambda(\theta) = \frac{1}{n}\sum_{i=1}^{n}|y_i - f(x_i;\theta)|^2 + \Lambda(\theta).
\end{equation}
The last step is to solve the optimization problem that aims to minimize the above loss function. It is usually solved by the gradient descent method, stochastic gradient descent method, or their variants.

Let $f_m$ be the minimizer of the population loss $\mathcal{R}(f)$, the best approximation to $f^*$ in $\mathcal{H}_m$, $f_{m,n}$ be the minimizer of the loss function $\mathcal{L}_n(\theta)$ in the hypothesis space $\mathcal{H}_m$ and $\hat{f}$ be the output of the optimization algorithm. Then the total error between the true target function $f^*$ and the output of the supervised learning algorithm $\hat{f}$ can be decomposed into three parts, where $\|\cdot\|_{L^2(\mu)}$ denotes the $L^2$-norm under the distribution $\mu$:
\begin{equation}
    \| f^* - \hat{f}\|_{L^2(\mu)} \leq \underbrace{\|f^*-f_m\|_{L^2(\mu)}}_\textrm{approximation error} +~~ \underbrace{\|f_m-f_{m,n}\|_{L^2(\mu)}}_\textrm{estimation error}~ + \underbrace{\|f_{m,n} - \hat{f}\|_{L^2(\mu)}}_\textrm{optimization error}.
\end{equation}

The first part is the approximation error $\|f^*-f_m\|_{L^2(\mu)}$, which arises because the hypothesis space $\mathcal{H}_m$ may not be able to represent the true function $f^*$ exactly. The second part is the estimation error $\|f_m-f_{m,n}\|_{L^2(\mu)}$, which  arises because we only have a finite dataset and may not be able to find the best approximation $f_m$. The third part is the optimization error $\|f_{m,n} - \hat{f}\|_{L^2(\mu)}$, which arises because the optimization algorithm may not converge to the true minimizer of the empirical loss. We will then discuss the approximation error, estimation error, and optimization error in the context of RL with function approximation.

\subsection{Approximation Error}\label{sec:appro}
To investigate the approximation error in the context of RL, we aim to understand the requirements for the transition probability and reward function needed to accurately approximate the Q-value function for value-based methods and the policy function for policy-based methods. Note that the value function $V^*_h$ is less significant, as we cannot directly compute the optimal policy based on it. The subsequent theorem demonstrates that when the action space is finite and the optimal Q-value function can be accurately approximated, the optimal policy can also be accurately approximated by the corresponding softmax policy. Consequently, our primary focus is on the conditions that ensure the Q-value function can be effectively approximated.
\begin{thm}\label{thm:approximate_optimal_policy}
   Assume that $\mathcal{A}$ is a finite set. Given any $\beta > 0$ and continuous functions $Q = \{Q_h\}_{h=1}^H: \bS\times\bA \rightarrow \bR$, let
\begin{equation}
    \pi_h^{Q,\beta}(a\cond s) = \frac{\exp(\beta Q_h(s,a))}{\sum_{a' \in \bA} \exp(\beta Q_h(s,a'))}.
\end{equation} 
Then,
\begin{equation}
    0 \le J^*(M) - J(M,\pi^{Q,\beta}) \le \frac{H\log|\bA|}{\beta}+ 2 \beta H\sum_{h=1}^H\EE_{M,\pi^{Q^*,\beta}}\max_{a\in\bA}|Q_h^*(S_h,a) - Q_h(S_h,a)|.
\end{equation}
\end{thm}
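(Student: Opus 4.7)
The nonnegative side $J^*(M)-J(M,\pi^{Q,\beta})\ge 0$ is immediate from $J^*(M)=\sup_\pi J(M,\pi)$. For the upper bound, my plan is to introduce the ``softmax-of-$Q^*$'' policy $\pi^{Q^*,\beta}$ as an intermediate reference and decompose
\begin{align*}
J^*(M) - J(M,\pi^{Q,\beta}) = \underbrace{\big[J^*(M) - J(M,\pi^{Q^*,\beta})\big]}_{\text{entropic gap}} + \underbrace{\big[J(M,\pi^{Q^*,\beta}) - J(M,\pi^{Q,\beta})\big]}_{\text{softmax sensitivity}}.
\end{align*}
The entropic gap captures the intrinsic cost of replacing the greedy policy by a softmax even when $Q^*$ is known exactly, and it will account for the $H\log|\bA|/\beta$ term. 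The softmax-sensitivity piece captures how much perturbing the input of the softmax from $Q^*$ to $Q$ perturbs the cumulative reward, and it will account for the $2\beta H \sum_h \EE_{M,\pi^{Q^*,\beta}}\max_a|Q_h^*-Q_h|$ term; this also explains why the expectation in the statement is under $\pi^{Q^*,\beta}$.

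For the entropic gap, I would invoke the elementary softmax identity on a finite action set: for any $Q:\bA\to\bR$,
\begin{align*}
0 \le \max_{a} Q(a) - \sum_{a} \pi^{Q,\beta}(a)\,Q(a) \le \frac{\log|\bA|}{\beta},
\end{align*}
which follows from the log-sum-exp formula together with $0\le H(\pi^{Q,\beta})\le\log|\bA|$. Combined with the Bellman equations for $V_h^*$ and $V_h^{\pi^{Q^*,\beta}}$, this yields the pointwise recursion
\begin{align*}
V_h^*(s) - V_h^{\pi^{Q^*,\beta}}(s) \le \frac{\log|\bA|}{\beta} + \sum_a \pi^{Q^*,\beta}(a\cond s)\,\EE_{s'\sim P(\cdot\cond h,s,a)}\!\big[V_{h+1}^*(s')-V_{h+1}^{\pi^{Q^*,\beta}}(s')\big],
\end{align*}
and a backward induction on $h$ from the base case $V_{H+1}\equiv 0$ immediately delivers $V_1^*-V_1^{\pi^{Q^*,\beta}}\le H\log|\bA|/\beta$; taking $\EE_{S_1\sim\mu}$ bounds the entropic gap.

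For the softmax-sensitivity term, the key analytic input is the $\ell_\infty\!\to\!\ell_1$ Lipschitz bound
\begin{align*}
\|\pi^{Q_1,\beta}(\cdot) - \pi^{Q_2,\beta}(\cdot)\|_1 \le 2\beta \max_{a}|Q_1(a)-Q_2(a)|,
\end{align*}
which can be obtained by integrating the softmax Jacobian $\beta(\operatorname{diag}(\pi)-\pi\pi^{\operatorname{T}})$ along the segment from $Q_1$ to $Q_2$. I would then split the one-step difference as
\begin{align*}
V_h^{\pi^{Q^*,\beta}}(s)-V_h^{\pi^{Q,\beta}}(s)
&= \sum_a\big[\pi^{Q^*,\beta}(a\cond s)-\pi^{Q,\beta}(a\cond s)\big]\,Q_h^{\pi^{Q,\beta}}(s,a)\\
&\quad + \sum_a \pi^{Q^*,\beta}(a\cond s)\,\EE_{s'}\!\big[V_{h+1}^{\pi^{Q^*,\beta}}(s')-V_{h+1}^{\pi^{Q,\beta}}(s')\big],
\end{align*}
bound the first sum by $2\beta H \max_a|Q_h^*(s,a)-Q_h(s,a)|$ using the Lipschitz estimate together with $Q_h^{\pi^{Q,\beta}}\in[0,H]$, and propagate the second sum recursively under $\pi^{Q^*,\beta}$. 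Iterating over $h=1,\dots,H$ and taking $\EE_{S_1\sim\mu}$ then yields exactly the second term of the claim.

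The main obstacle, and essentially the only nontrivial choice, is identifying the two-step decomposition through $\pi^{Q^*,\beta}$. A direct comparison of $\pi^{Q,\beta}$ against $\pi^*$ conflates the entropic penalty with the softmax-sensitivity contribution, and naturally produces expectations under $\pi^*$ rather than under $\pi^{Q^*,\beta}$ as stated; the carefully chosen intermediate policy is what delivers the correct sampling distribution. Once this is in place, both inductions on $h$ are routine Bellman-equation manipulations.
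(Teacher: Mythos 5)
Your proposal is correct and follows essentially the same route as the paper: the same decomposition through the intermediate policy $\pi^{Q^*,\beta}$, the same $\log|\bA|/\beta$ softmax-gap bound for the first term, and the same $2\beta$ Lipschitz estimate on the softmax for the second. The only differences are cosmetic — you derive the gap bound via the entropy identity rather than convexity of log-sum-exp, obtain the Lipschitz constant from the Jacobian rather than the elementary inequality $|e^x-e^y|\le (e^x+e^y)|x-y|/2$, and unroll the Bellman recursion directly instead of invoking the performance difference lemma.
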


In supervised learning, a common approach to investigating the approximation error involves proving that the target function resides in specific function spaces, such as reproducing kernel Hilbert space (RKHS) and Barron space. These spaces are ideal for particular approximation schemes, like kernel function and neural network approximation, due to the direct and inverse approximation theorems present within these function spaces. In other words, any function within the space can be approximated using the selected approximation method at a specific rate of convergence, and any function that can be approximated at a particular rate belongs to that function space. We will provide a comprehensive introduction to RKHS and Barron space in Section \ref{sec:space}. 

In RL, our primary focus is on whether the Q-value function, as a function of state and action, lies within the specific function space. Depending on the specific algorithm we analyze, we need to determine the conditions of $P$ and $r$ under which the MDP satisfies the following properties:

\begin{enumerate}[label=Property \arabic*,leftmargin=2.1\parindent,itemindent=2\parindent]
    \item\label{item:approx_Q1}  The optimal Q-value function $Q_h^*$ lies in the specific function space.
    \item\label{item:approx_Q2}  The Q-value function $Q_h^\pi$ lies in the specific function space for any policy $\pi$. This is often required in the policy iteration algorithm with function approximation (see, e.g., \cite{lagoudakis2003least}).
    \item\label{item:approx_Q3} The Bellman optimal operator
    \begin{equation}
        (\mathcal{T}^*_h f)(s,a) = r(h,s,a)+ \EE_{s'\sim P(\,\cdot\cond h,s,a) }[\max_{a'\in\bA}f(s',a')]
    \end{equation}
    maps the specific function space to itself and $\|\mathcal{T}_h^*f\| \le C[1+\|f\|]$ for a constant $C > 0$. This is often required in the fitted Q-iteration and value iteration algorithm with function approximation (see, e.g., \cite{zanette2020learning,yang2020provably,yang2020function}).
    \item\label{item:approx_Q4}  The Bellman operator
     \begin{equation}
        (\mathcal{T}_h f)(s,a) = r(h,s,a)+ \EE_{s'\sim P(\,\cdot\cond h,s,a) }[f(s')]
    \end{equation}
    maps any bounded function in $C(\bS)$ to the specific function spaces and $\|\mathcal{T}_h f\| \le C[1 + \|f\|_{C(\bS)}]$ for a constant $C > 0$.
\end{enumerate}
In these questions, we assume that the function space is a Banach space with norm $\|\cdot\|$ and a subset of the space of all bounded functions in $C(\bS\times\bA)$. Noticing that for any policy $\pi$, we have that $Q_h^\pi = \mathcal{T}_h V_{h+1}^\pi$ and $V_{h+1}^\pi \in [0,H]$, we know that \ref{item:approx_Q4} implies \ref{item:approx_Q2} and hence implies \ref{item:approx_Q1}. Observing that for any bounded function $f$ in $C(\bS\times\bA)$, $\max_{a' \in \bA} f(s',a')$ is a bounded function in $C(\bS)$, we know that \ref{item:approx_Q4} implies \ref{item:approx_Q3}. Moreover, since $Q_{h}^* = \mathcal{T}_h Q_{h+1}^*$, \ref{item:approx_Q3} implies \ref{item:approx_Q1}. Finally, we remark that \ref{item:approx_Q2} and \ref{item:approx_Q3} cannot be inferred from each other \cite[Proposition~5]{zanette2020learning}.
We introduce \ref{item:approx_Q4} because it is the strongest one among those properties and the conditions which imply \ref{item:approx_Q4} are easy to analyze due to the linearity of $\mathcal{T}_h$.

\paragraph{}\noindent
\begin{proof}(of Theorem \ref{thm:approximate_optimal_policy})
By the definition of the optimal value, we have
\begin{equation}
    J^*(M) - J(M,\pi^{Q,\beta}) \ge 0.
\end{equation}
Noticing that
\begin{equation}
   J^*(M) - J(M,\pi^{Q,\beta}) = J^*(M) - J(M,\pi^{Q^*,\beta}) + J(M,\pi^{Q^*,\beta}) - J(M,\pi^{Q,\beta}) := I_1 + I_2,
\end{equation}
we will estimate $I_1$ and $I_2$ respectively.

Using the classical performance difference lemma \cite{kakade2002approximately}, we have
\begin{align}
    I_1 &= \sum_{h =1}^H\EE_{M,\pi^{Q^*,\beta}}\sum_{a\in\bA}Q_h^*(S_h,a)[\pi_h^*(a\cond S_h) - \pi_h^{Q^*,\beta}(a\cond S_h)] \\
    &=\sum_{h =1}^H\EE_{M,\pi^{Q^*,\beta}}\left[\max_{a\in\bA}Q_h^*(S_h,a)-\frac{\sum_{a\in\bA}Q_h^*(S_h,a)\exp(\beta Q_h^*(S_h,a))}{\sum_{a\in\bA}\exp(\beta Q_h^*(S_h,a))}\right].
\end{align}
We will then prove that for any $q = (q_a)_{a\in\bA}$,
\begin{equation}
    q_m- \frac{\sum_{a \in \bA} q_a \exp(\beta q_a)}{\sum_{a\in\bA}\exp(\beta q_a)} \le \frac{\log|\bA|}{\beta},
\end{equation}
where $q_m = \max_{a\in\bA}q_a$.
We can then conclude that 
\begin{equation}\label{eq_I_1}
    I_1 \le \frac{H\log|\bA|}{\beta}.
\end{equation}
Noticing that
\begin{equation}
      q_m - \frac{\sum_{a \in \bA} q_a \exp(\beta q_a)}{\sum_{a\in\bA}\exp(\beta q_a)}= -\frac{(q_a-q_m)\exp(\beta(q_a-q_m))}{\exp(\beta(q_a-q_m))}.
\end{equation}
Define $\phi:\bR^{|\bA|} \rightarrow \bR$:
\begin{align}
    \phi(x) = \beta^{-1}\log(\sum_{a\in\bA}\exp(\beta x_a)).
\end{align}
Then
\begin{equation}
    q_m - \frac{\sum_{a \in \bA} q_a \exp(\beta q_a)}{\sum_{a\in\bA}\exp(\beta q_a)} = -(q-q_m)\transpose \nabla \phi(q-q_m).
\end{equation}
Noticing that $\phi$ is a convex function, we have \cite{gao2017properties}
\begin{equation}
    -(q-q_m)\transpose \nabla \phi(q-q_m) = [0-(q-q_m)]\transpose \nabla \phi(q-q_m) \le \phi(0) - \phi(q-q_m) = \frac{\log|\bA|}{\beta}- \phi(q-q_m). 
\end{equation}
Noticing that there exists an $a\in\bA$ such that $q_a-q_m = 0$, we have
\begin{equation}
    \phi(q-q_m) \ge 0.
\end{equation}
Therefore,
\begin{equation}
     q_m - \frac{\sum_{a \in \bA} q_a \exp(\beta q_a)}{\sum_{a\in\bA}\exp(\beta q_a)} \le \frac{\log|\bA|}{\beta}.
\end{equation}

For $I_2$, we again use the performance difference lemma to obtain that
\begin{align}
    I_2 &= \sum_{h =1}^H\EE_{M,\pi^{Q^*,\beta}}\sum_{a\in\bA}Q_h^{\pi^{Q^*,\beta}}(S_h,a)[\pi_h^{Q^*,\beta}(a\cond S_h) - \pi_h^{Q,\beta}(a\cond S_h)] \\ 
    &\le H\sum_{h =1}^H\EE_{M,\pi^{Q^*,\beta}}\sum_{a\in\bA}|\pi_h^{Q^*,\beta}(a\cond S_h) - \pi_h^{Q,\beta}(a\cond S_h)| \\
    &=H\sum_{h =1}^H\EE_{M,\pi^{Q^*,\beta}}\sum_{a\in\bA}
    \left|\frac{\exp(\beta Q_h^*(S_h,a))}{\sum_{a' \in \bA}\exp(\beta Q_h^*(S_h,a'))} - \frac{\exp(\beta Q_h(S_h,a))}{\sum_{a' \in \bA}\exp(\beta Q_h(S_h,a'))}\right|.
\end{align}
Given $q = \{q_a\}_{a\in\bA}$ and $\bar{q} = \{\bar{q}_a\}_{a\in \bA}$, we have
\begin{align}
    &\sum_{a\in\bA}
    \left|\frac{\exp(\beta q_a)}{\sum_{a' \in \bA}\exp(\beta q_{a'})} - \frac{\exp(\beta \bar{q}_a)}{\sum_{a' \in \bA}\exp(\beta \bar{q}_{a'})}\right| \\
    \le& \frac{\sum_{a\in\bA}\sum_{a' \in \bA}|\exp(\beta q_a + \beta \bar{q}_{a'}) - \exp(\beta \bar{q}_a+\beta q_{a'})|}{\sum_{a\in\bA}\exp(\beta q_a)\sum_{a\in\bA}\exp(\beta \bar{q}_a)} \\
    \le & \beta\max_{a \in \bA}|q_a - \bar{q}_a| \frac{\sum_{a\in\bA}\sum_{a' \in \bA}[\exp(\beta q_a + \beta \bar{q}_{a'}) + \exp(\beta \bar{q}_a+\beta q_{a'})]}{\sum_{a\in\bA}\exp(\beta q_a)\sum_{a\in\bA}\exp(\beta \bar{q}_a)} \\
    =& 2\beta\max_{a\in\bA}|q_a-\bar{q}_a|,
\end{align}
where we used $|e^x-e^y| \le (e^x+e^y)|x-y|/2$. Therefore
\begin{equation}
    I_2 \le 2\beta H \sum_{h =1}^H\EE_{M,\pi^{Q^*,\beta}}\max_{a \in \bA}|Q_h^*(S_h,a) -Q_h(S_h,a)|.
\end{equation}
Combining the above estimation and inequality \eqref{eq_I_1}, we conclude our proof.
\end{proof}
\subsection{Estimation Error}
In the context of RL, sample complexity, \text{i.e.,} the number of samples required to obtain a near-optimal policy, is often used to refer to the estimation error and plays a central role in the theoretical analysis of RL.
However, in contrast to the estimation error in supervised learning, which can be characterized by the gap between the empirical loss and the population loss, the estimation error in RL is much more complex. The main challenge in the RL problem is the so-called \textit{distribution mismatch} phenomenon. Take value-based methods as an example. Assume that we have an estimation of the optimal Q-value function $Q_h^*$, denoted by $\hat{Q}_h^*$, which is close to $Q_h^*$ in the sense of $L^2(\nu)$ for a prespecified distribution $\nu$. Then, we consider the performance of the greedy policy $\hat{\pi}$ with respect to $\hat{Q}_h^*$. Using the performance difference lemma, we have
\begin{align}
    0 \le J(M,\pi^*) - J(M,\pi) &= \EE_{M,\hat{\pi}} \sum_{h=1}^H\sum_{a\in\bA}Q_h^*(S_h,a)[\pi_h^*(a\cond S_h) - \hat{\pi}_h(a\cond S_h)]\\
    &\le \EE_{M,\hat{\pi}}\sum_{h=1}^H\sum_{a\in\bA}[Q_h^*(S_h,a) - \hat{Q}_h^*(S_h,a)][\pi_h^*(a \cond S_h) - \hat{\pi}_h(a \cond S_h)],
\end{align}
where in the last inequality, we use that $\sum_{a\in\bA} \hat{Q}^*_h(s,a)[\pi_h^*(a \cond s)-\hat{\pi}_h(a \cond s)] \le 0$ since $\hat{\pi}_h$ is the greedy policy with respect to $\hat{Q}_h^*$. Therefore, we need to control the difference between $Q_h^*$ and $\hat{Q}_h^*$ under the state distribution generated by the policy $\hat{\pi}$, which is unknown before we obtain $\hat{Q}_h^*$. We refer to this phenomenon as the distribution mismatch: a mismatch between the distribution $\nu$ for estimation and the distribution for evaluation that is unknown a priori. This phenomenon is ubiquitous in the analysis of RL; see, e.g.,~\cite[Section~6]{kakade2002approximately}.

\subsection{Optimization Error}
In the context of RL, the optimization error differs significantly between value-based methods and policy-based methods. In value-based methods, the optimization error arises during the optimization process at each iteration, such as in \eqref{optimization:1} or \eqref{optimization:2} in Algorithm \ref{alg_fqi} or \ref{alg_vi}. As these optimization problems typically have a similar form to those in supervised learning, the analysis of the optimization error in RL is largely comparable to the analysis of the optimization error in supervised learning. On the other hand, the optimization error in policy-based methods, particularly the rate at which the algorithm's performance converges as the number of iterations increases, is a key focus in the theoretical analysis of these methods. The analysis of the optimization problem in policy-based methods is more challenging than in supervised learning due to the shift of the distribution of the trajectories $\{S_1^i,A_1^i,\dots,S_H^i,A_H^i\}_{i=1}^N$ in Algorithm \ref{alg_pg} during the optimization process in policy-based methods.

\section{Linear Setting}\label{sec:linear}
The simplest form of function approximation is linear function approximation. It is the setting under which the most recent theoretical results in RL are derived with function approximation. In the linear setting, we do not assume that the state space and action space are finite, and hence we need to make some structural assumptions to obtain meaningful results. The most common one is the linear MDP assumption introduced in \cite{jin2020provably}.

\begin{defn}[\textrm{Linear MDP}]
We say an MDP$(\bS,\bA,H,P,r,\mu)$ is a linear MDP with a feature map $\bm \phi:\bS\times \bA \rightarrow 
\bR^d$, if for any $h \in [H]$, there exists $d$ unknown signed measures $\bm \mu_h = (\mu_h^1,\dots,\mu_h^d)$ over $\bS$ and an unknown vector $\bm \theta_h \in \bR^d$, such that for any $(s,a) \in \bS \times \bA$
\begin{align}
    &P(\,\cdot\cond h,s,a) = \bm \phi\transpose(s,a)\bm \mu_h(\cdot),\\
    &r(h,s,a) = \bm \phi\transpose(s,a) \bm \theta_h. 
\end{align}
\end{defn}

We shall notice that the tabular MDP is a special case of the linear MDP if we set $d = |\bS||\bA|$, index each coordinate of $\bR^d$ by state-action pair $(s,a) \in \bS \times \bA$, choose $\bm\phi(s,a)$ as the canonical basis in $\bR^d$ and set
\begin{equation}\label{tabular_as_linear}
    (\bm \theta_h)_{s,a} = r(h,s,a), \,(\bm \mu_h(\cdot))_{s,a} = P(\,\cdot \cond h,s,a)
\end{equation}
for any $(h,s,a) \in [H]\times\bS\times\bA$.

\subsection{Approximation Error}
The next theorem demonstrates that the linear MDP assumption is the necessary and sufficient condition of the \ref{item:approx_Q4} holding true when discussing the approximation error in Section \ref{sec:appro}. Noticing that \ref{item:approx_Q4} is the strongest one among the four, we know that under linear MDP assumption, for any policy $\pi$, the corresponding Q-value function $Q_h^\pi$ lies in the linear space, i.e., there exist weights $\{\bm w_h^\pi\}_{h \in [H]}$ such that for any $(h,s,a) \in [H]\times\bS\times\bA$,
\begin{equation}\label{Q_h_linear}
    Q_h^\pi(s,a) = \bm \phi\transpose(s,a) \bm w_h^\pi.
\end{equation}

\begin{thm}\label{thm:lin_mdp}
Assume that $\bS\times\bA$ is a compact set and $\bm \phi: \bS\times\bA \rightarrow \bR^d$ is a feature map. Then the following two statements are equivalent. 
\begin{enumerate}
    \item There exist $d$ signed measures $\bm \mu_h = (\mu_h^1,\dots,\mu_h^d)$ over $\bS$ and a vector $\bm \theta_h \in \bR^d$, such that for any $(s,a) \in \bS\times\bA$
    \begin{align}
    &P(\,\cdot\cond h,s,a) = \bm \phi\transpose(s,a)\bm \mu_h(\cdot),\\
    &r(h,s,a) = \bm \phi\transpose(s,a) \bm \theta_h. 
\end{align}
\item For any $f \in C(\bS)$, there exists $\bm \omega_{f,h} \in \bR^d$ such that
\begin{equation}
     (\mathcal{T}_h f)(s,a) = r(h,s,a) + \EE_{s'\sim P(\,\cdot\cond h,s,a)}[f(s')] = \bm \phi\transpose(s,a)\bm \omega_{f,h}
\end{equation}
 and $\|\bm \omega_{f,h}\| \le C[\|f\|_{C(\bS)}+1]$ for a constant $C>0$, where we use $\|\cdot\|$ to denote the $l^2$-norm on $\bR^d$.
\end{enumerate}
\end{thm}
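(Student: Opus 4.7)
The plan is to prove the two implications separately; only the reverse direction $(2)\Rightarrow(1)$ requires nontrivial work. For $(1)\Rightarrow(2)$, I would substitute the linear forms of $P$ and $r$ directly into the Bellman operator and pull the feature vector out of the integral against $f$, obtaining $(\mathcal{T}_h f)(s,a) = \bm\phi\transpose(s,a)\bigl[\bm\theta_h + \int f(s')\,\rmd\bm\mu_h(s')\bigr]$. Setting $\bm\omega_{f,h} := \bm\theta_h + \int f\,\rmd\bm\mu_h$, the norm estimate follows from the triangle inequality together with the finiteness of the total variation of each component $\mu_h^i$.

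For $(2)\Rightarrow(1)$, I would first take $f\equiv 0$ to extract the linear representation of the reward, setting $\bm\theta_h := \bm\omega_{0,h}$; the hypothesis then reduces to $\EE_{s'\sim P(\,\cdot\cond h,s,a)}[f(s')] = \bm\phi\transpose(s,a)(\bm\omega_{f,h}-\bm\theta_h)$ for every $f\in C(\bS)$. The central obstacle is that $\bm\omega_{f,h}$ is not unique in $\bR^d$ --- two valid choices differ by a vector orthogonal to $V := \overline{\mathrm{span}}\{\bm\phi(s,a) : (s,a)\in\bS\times\bA\}$ --- so there is no a priori reason for the map $f\mapsto\bm\omega_{f,h}$ to be linear in $f$, which is exactly what we need in order to apply the Riesz representation theorem coordinate by coordinate.

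I would resolve this by defining $L_h(f)\in V$ as the orthogonal projection of $\bm\omega_{f,h} - \bm\theta_h$ onto $V$; this projection is independent of the particular choice of $\bm\omega_{f,h}$, hence well-defined. Linearity of the expectation in $f$, together with the fact that distinct elements of $V$ induce distinct linear functionals on $\{\bm\phi(s,a)\}$, then forces $L_h : C(\bS)\to\bR^d$ to be linear. The hypothesis $\|\bm\omega_{f,h}\|\le C(\|f\|_{C(\bS)}+1)$ gives an affine estimate of the form $\|L_h(f)\|\le C\|f\|_{C(\bS)} + C'$; applying this to $tf$, dividing by $t$, and letting $t\to\infty$ upgrades it to the genuinely linear bound $\|L_h(f)\|\le C''\|f\|_{C(\bS)}$.

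Since $\bS\times\bA$ is compact, so is $\bS$, and each coordinate $L_h^i$ is a bounded linear functional on $C(\bS)$. By the Riesz--Markov--Kakutani theorem there exists a finite signed Borel measure $\mu_h^i$ on $\bS$ representing it, and I assemble $\bm\mu_h := (\mu_h^1,\ldots,\mu_h^d)$. The identity $\int f\,\rmd P(\,\cdot\cond h,s,a) = \int f(s')\,\bm\phi\transpose(s,a)\,\rmd\bm\mu_h(s')$ then holds for every $f\in C(\bS)$, and by Riesz uniqueness this forces the measure-level equality $P(\,\cdot\cond h,s,a) = \bm\phi\transpose(s,a)\bm\mu_h(\cdot)$, completing the recovery of the linear MDP structure. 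The main difficulty throughout is the projection step that handles the non-uniqueness of $\bm\omega_{f,h}$; once $L_h$ has been made linear and bounded, the extraction of the signed measures is a clean application of classical functional analysis.
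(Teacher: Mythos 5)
Your proposal is correct and follows essentially the same route as the paper: the forward direction by pulling $\bm\phi$ out of the integral, and the reverse direction by setting $\bm\theta_h=\bm\omega_{0,h}$, projecting $\bm\omega_{f,h}$ onto the span of the features to obtain a well-defined linear map (the paper's $\mathcal{W}$ is exactly your $V$), upgrading the affine bound to a homogeneous one via scaling, and invoking the Riesz representation theorem coordinate-wise. The only cosmetic difference is that you correctly phrase the Riesz step on $C(\bS)$, where the paper's writeup has $C(\bS\times\bA)$.
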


The linear MDP assumption completely ensures \ref{item:approx_Q4} (in Section \ref{sec:appro}) in the linear setting, but it rules out non-trivial deterministic MDPs that are frequently encountered in real-world situations. This is because $P(\,\cdot\cond h,s,a)$ is a delta distribution and hence the supports of $\mu_h^1,\dots,\mu_h^d$ are all single point sets. Therefore, the MDP can only visit at most $d$ state when $h \ge 2$. Hence, there are some studies that direct assume \ref{item:approx_Q2} \cite{agarwal2021theory} or \ref{item:approx_Q3} \cite{zanette2020learning}. However, there has been  little investigation into the concrete conditions of the transition probability and reward function  under which the MDP satisfies \ref{item:approx_Q2} or \ref{item:approx_Q3}.

\paragraph{}\noindent
\begin{proof}(Proof of Theorem \ref{thm:lin_mdp})
$1 \Rightarrow 2$: for any $f \in C(\bS)$, we have
\begin{equation}
    \mathcal{T}_h f = r(h,s,a) + \EE_{s'\sim P(\,\cdot\cond h,s,a)}[f(s')] = \phi\transpose(s,a)[\bm{\theta_h}+ \int_{\bS}f(s')\rmd \bm{\mu}_h(s')].
\end{equation}
Therefore, let
\begin{equation}
   \bm \omega_{f,h} = \bm{\theta}_h + \int_{\bS} f(s')\rmd \bm \mu_h(s').
\end{equation}
Hence,
\begin{equation}
    \|\bm \omega_{f,h}\| \le \|\bm{\theta}_h\| + \||\bm\mu_h|_{TV}\|\|f\|_{C(\bS\times\bA)},
\end{equation}
where $|\bm \mu_h|_{TV} = (|\mu_h^1|_{TV},\dots,|\mu_h^d|_{TV})$ is the total variation of signed measures $\bm \mu_h$. We can then choose $C = \max\{\|\bm \theta_h\|,\||\mu|_{TV}\|\}$.

$2\Rightarrow 1$: 
Let $f = 0$, we know that we can choose $\bm \theta_h = \bm \omega_{0,h}$ such that $r(h,s,a) = \bm \phi\transpose(s,a)\theta_h$. Moreover $\|\bm \theta_h\| \le C$.

Let 
\begin{equation}
    \mathcal{W}_0 = \{\bm \omega \in \bR^d: \bm \phi\transpose (s,a)\cdot \bm \omega = 0, \forall (s,a) \in \bS\times \bA \}.
\end{equation}
Noticing that $\mathcal{W}_0$ is a subspace of $\bR^d$, we can define $\mathcal{W}$ as the orthogonal complement of $\mathcal{W}_0$. For any $f \in C(\bS)$, let $\bm \omega'_{f,h}$ be the orthogonal projection of $\bm \omega_{f,h}$ to $\mathcal{W}$. Then by the definition of $\mathcal{W}_0$, we have
\begin{equation}
    (\mathcal{T}_h f)(s,a) = \bm \phi \transpose(s,a)\bm\omega_{f,h} = \bm \phi\transpose(s,a)\bm\omega'_{f,h}.
\end{equation}
On the other hand, for any $\bm \omega \in \mathcal{W}$ such that $(\mathcal{T}_h f)(s,a) = \bm \phi \transpose(s,a) \bm \omega$ holds for any $(s,a) \in \bS\times \bA$, by the definition of $N_0$, we know that $\omega = \omega'_{f,h}$. We can then define a mapping from $f$ to $\bm\omega'_{f,h}$ and 
$\mathcal{B}: C(\bS\times\bA)\rightarrow N$:
\begin{equation}
    \mathcal{B} f = \bm \omega'_{f,h} - \bm \theta'_h,
\end{equation}
where $\bm \theta'_h$ is the orthogonal projection of $\bm \theta_h$ to $\mathcal{W}$.
Then,
\begin{equation}
    \EE_{s' \sim P(\,\cdot\cond h,s,a)}[f(s')] = \phi\transpose(s,a) (\mathcal{B}f).
\end{equation}
We can then prove that $\mathcal{B}f$ is a linear mapping and
\begin{equation}
    \|\mathcal{B} f\| \le \|\bm\omega'_{f,h}\| + \|\bm \theta'_h\| \le \|\bm \omega_{f,h}\| + \|\bm \theta_h\| \le C(\|f\|_{C(\bS\times\bA)}+2)
\end{equation}
Then for any $\|f\|_{C(\bS\times\bA)} = 1$, we have $\|\mathcal{B}f\| \le 3C$, which means that 
\begin{equation}
    \|\mathcal{B}f\| \le 3C\|f\|_{C(\bS\times\bA)}.
\end{equation}
Therefore, $\mathcal{B} f$ is a bounded linear mapping from $C(\bS\times\bA)$ to $\mathcal{W}$. Noticing that $\mathcal{W}$ is a finite-dimensional linear space, we can use the Risez representation theorem on $C(\bS\times\bA)$ \cite{riesz1909operations} to show that there exists $d$ signed measure $\bm \mu_h = (\mu_h^1,\dots,\mu_h^d)$ such that for any $f \in C(\bS\times\bA)$,
\begin{equation}
    \EE_{s' \sim P(\,\cdot\cond h,s,a)}[f(s')] = \bm \phi\transpose(s,a)(\mathcal{B} f) = \bm \phi \transpose(s,a)  \int_{\bS} f(s')\rmd \bm \mu_h(s') = \int_{\bS} f(s')\rmd \bm \phi\transpose(s,a) \mu_h(s'),
\end{equation}
which means that
\begin{equation}
    P(\,\cdot \cond h,s,a) = \bm \phi\transpose(s,a) \bm \mu_h(\cdot).
\end{equation}
\end{proof}

\subsection{Estimation and Optimization Error}
Under the linear MDP assumption, the above analysis justifies our use of a linear function to approximate the Q-value function. We can then apply the fitted Q-iteration algorithm or value iteration methods in Section \ref{value_base_method} with linear function approximation. We can also use the linear function to approximate the Q-value function and then use the softmax policy to approximate the optimal policy to apply the policy gradient algorithms in Section \ref{policy_based_method}. Here we take the value iteration (Algorithm \ref{alg_vi}) in the episodic setting \cite{jin2020provably} as an example to discuss the estimation error. In \cite{jin2020provably}, they set
\begin{align}
    &\mathcal{F}_h = \{\bm \phi\transpose(s,a)\theta, \theta \in \bR^d\},\\
    &\Lambda_h(f) = \lambda \|\theta\|^2, \\
    &b_h^{k+1}(s,a) = \beta[\bm \phi(s,a)\transpose(\sum_{i=1}^k\bm \phi(S_h^i,A_h^i)\bm \phi\transpose(S_h^i,A_h^i) + n\lambda \mathrm{I})^{-1}\bm \phi(s,a)]^{\frac{1}{2}}.
\end{align}
Intuitively speaking, the bonus term $b_h^{k+1}$ is proportional to the standard deviation of the $\hat{Q}_h^{k+1}$. It is proved that \cite[Lemma~B.5]{jin2020provably}, with high probability,
\begin{equation}
    Q_h^{*}(s,a) \le Q_h^{k+1}(s,a), \, \forall (s,a) \in \bS\times\bA.
\end{equation}
In this sense, $Q_h^{k+1}$ is the uniformly upper confidence bound of $Q_h^*$. The introduction of the bonus term $b_h^{k+1}$ and resulting UCB estimation is the crucial step to address the distribution mismatch phenomenon. By properly choosing the parameters $\beta$ and $\lambda$, \cite{jin2020provably} prove that Algorithm \ref{alg_vi} can achieve the following regret bounds (recalling that $K$ denotes the number of episodes)
\begin{equation}
    \mathrm{Regret}(K) = \tilde{O}(\sqrt{d^3H^4K}),
\end{equation}
which implies that to obtain an $\epsilon$-optimal policy, the algorithm needs at most $\tilde{O}(\frac{d^3H^5}{\epsilon^2})$ samples. Such a result is independent of the number of states and actions and is much more general than the tabular setting.

In Algorithm \ref{alg_vi}, the optimization problem \eqref{optimization:2} is a ridge regression problem whose solution can be exactly computed, so there is no error introduced in this optimization step. However, we need to compute $\max_{a \in \bA} Q_h^k(s,a)$ many times. If $|\bA|$ is finite, the maximum can be exactly computed. However, if $|\bA|$ is infinite, numerical errors may be introduced when calculating the maximum value.

There are other works in the RL literature studying RL problems in the linear setting. \cite{yang2019sample,lattimore2020learning,wang2021sample} consider the linear setting with a generative model in the time-homogeneous case. These works use $L^\infty$ estimation instead of UCB estimation to handle the distribution mismatch phenomenon. \cite{zanette2020learning} considers a similar assumption with \cite{jin2020provably} but provides an algorithm with a tighter regret bound. \cite{zhou2021provably} also considers the RL problem in a linear setting but with respect to discounted MDP with infinite horizons. \cite{cai2020provably,agarwal2020pc,agarwal2021theory} study the policy-based methods for the RL problem in the linear setting. 
Most of these results establish the polynomial sample complexity with respect to the number of features $d$, the length of the episode $H$, and the accuracy $\epsilon$ under similar assumptions. However, so far, the gap between the lower bound and upper bound in sample complexity still exists in the linear setting, except for results in \cite{yang2019sample,wang2021sample}, which require a generative model and a very restrictive assumption called anchor state-action pairs assumption. This gap is evident by noticing that the tabular MDP is a special case of the linear MDP, and the lower bound in the tabular MDP implies a naive lower bound $\frac{dH^3}{\epsilon^2}$; see \cite[Section~5]{jin2020provably} for a detailed discussion.
Bridging the gap is an important direction for future work.

\section{Nonlinear Setting}\label{sec:nonlinear}
\subsection{RKHS, NTK and Barron Space}\label{sec:space}
As powerful function approximation tools (particularly in high dimensions), kernel functions and neural networks are now widely used in various machine learning tasks, including RL problems. Theoretical analysis of RL algorithms involving function approximations hinges on the proper choice of function spaces and a deep understanding of these spaces.
In this subsection, we will briefly introduce the concepts of reproducing kernel Hilbert space (RKHS), neural tangent kernel (NTK), and Barron space, as they are suitable function spaces associated with kernel function and neural network approximation. In particular, we introduce the theoretical results of supervised learning algorithms in these function spaces, which will be the foundation for analyzing RL algorithms with function approximation.

\paragraph{RKHS.} Suppose $k$ is a continuous positive definite kernel that satisfies:
\begin{enumerate}
    \item $k(x,x') = k(x',x), \forall x,x' \in \mathcal{X}$;
    \item $\forall m \ge 1$, $x_1,\dots,x_m \in \mathcal{X}$ and $a_1,\dots,a_m \in \bR$, we have:
    \begin{equation}
        \sum_{i=1}^m\sum_{j=1}^ma_ia_jk(x_i,x_j) \ge 0.
    \end{equation}
\end{enumerate}
Then, there exists a Hilbert space  $\mathcal{H}_k \subset C(\mathcal{X})$ such that
\begin{enumerate}
    \item $\forall x \in \bR^d$, $k(x,\,\cdot\,) \in \mathcal{H}_k$;
    \item $\forall x \in \bR^d$ and $f \in \mathcal{H}_k$, $f(x) = \langle f, k(x,\,\cdot\,)\rangle_{\mathcal{H}_k}$.
\end{enumerate}
$k$ is called the reproducing kernel of $\mathcal{H}_{k}$ \cite{aronszajn1950theory}, and we use $\|\cdot\|_{\mathcal{H}_k}$ to denote the norm of the Hilbert space $\mathcal{H}_k$. Common examples of reproducing kernels include the Gaussian kernel $k(x,x') = \exp(-\alpha\|x-x'\|^2)$ and the Laplacian kernel $k(x,x') = \exp(-\alpha\|x-x'\|)$ $(\alpha>0)$.

The kernel method can efficiently learn functions in the RKHS with finite data. 
In the kernel method, we set the hypothesis space as $\mathcal{H}_k$, the entire RKHS. In this sense, since the target function $f^*$ lies in $\mathcal{H}_k$, the approximation error is zero. We can then define the loss function
\begin{equation}
    \mathcal{L}_n(f) = \frac{1}{n}\sum_{i=1}^n|y_i - f(x_i)|^2 + \lambda \|f\|_{\mathcal{H}_k}^2.
\end{equation}
We can then obtain the kernel ridge estimator $\hat{f} = \argmin_{f \in \mathcal{H}_k} \mathcal{L}_n(f)$. If we choose $\lambda = n^{-\frac{1}{2}}$, then the estimation error 
\begin{equation}
    \|f^* - \hat{f}\|_{L^2(\mu)} \le O([1+\|f\|_{\mathcal{H}_k}]n^{-\frac{1}{4}}).
\end{equation}
We can then show that the kernel ridge estimator can be computed exactly, and hence the optimization error is zero. First, using the Proposition 4.2 in \cite{paulsen2016introduction}, we know that
\begin{equation}
    \min_{f \in \mathcal{H}_k} \mathcal{L}_n(f) = \min_{f = \sum_{i=1}^n \alpha_i k(\cdot,x_i)}\mathcal{L}_n(f).
\end{equation}
Then we only need to compute the $\alpha_1,\dots,\alpha_n$ to minimize the loss function. Moreover, if $f = \sum_{i=1}^n\alpha_i k(\cdot,x_i)$, then 
\begin{equation}
    \mathcal{L}_n(f) = \frac{1}{n}[\bm y - K_n \bm \alpha]\transpose[\bm y - K_n \bm \alpha] + \lambda \alpha\transpose K_n \bm \alpha,
\end{equation}
where $\bm y = (y_1,\dots,y_n)\transpose$, $\bm \alpha = (\alpha_1,\dots,\alpha_n)\transpose$ and $K_n = (k(x_i,x_j))_{1\le i,j \le n}$ is an $n\times n$ matrix. Therefore,
\begin{equation}
    \bm \hat{\alpha} = (K_n + \lambda n \mathrm{I}_d)^{-1}\bm y,
\end{equation}
which can be computed directly. See \cite{caponnetto2007optimal,steinwart2009optimal,rudi2017falkon} for more details on the kernel method.


\paragraph{NTK.} Neural tangent kernel (NTK) was first introduced to study the overparameterized neural networks (\cite{jacot2018neural}). For the input data $x\in \bR^d$, we consider a two-layer ReLU neural network with $m$ neurons:
\begin{equation}
    f(x;\theta) = \frac{1}{\sqrt{m}}\sum_{i=1}^ma_i\sigma(\omega_i\transpose x),
\end{equation}
where $\sigma (x)= \max \{x,0\}$ is the ReLU activation function. Here $\theta$ denotes the collection of all the parameters $(a_1,\omega_1,\dots,a_m,\omega_m)$, with $a_i\in\bR, \omega_i\in \bR^d, i=1,\dots,m$. If we initialize $\theta$ according to the following rule:
\begin{equation}\label{NTK_scale}
    a_i \stackrel{i.i.d.}{\sim} \mathcal{N}(0,1),\, \omega_i \stackrel{i.i.d.}{\sim} \mathcal{N}(0,\mathrm{I}_d/d),
\end{equation}
then the fully trained neural networks approximate the kernel ridge regression on the RKHS with respect to the NTK $k_{\textrm{NTK}}$ when the width $m$ goes to the infinity \cite{arora2019exact}, where
\begin{equation}
    k_{\textrm{NTK}}(x,x') = \EE_{\omega \sim \mathcal{N}(0,\mathrm{I}_d/d)}[x\transpose x'\sigma'(\omega\transpose x)\sigma'(\omega\transpose x')+ \sigma(\omega\transpose x)\sigma(\omega\transpose x')]
\end{equation}
and $\sigma'(x)= \mathrm{1}_{x > 0}$ is the derivative of $\sigma$. The theory of NTK establishes a connection between neural network and kernel methods and shows that it is sufficient to study the corresponding NTK if we are interested in the overparameterized neural networks with the NTK scaling \eqref{NTK_scale}. Therefore, we will not discuss RL with neural network approximation under the NTK regime, as it is covered by the kernel function approximation.
For more details on the NTK theory, including the NTK corresponding to multi-layer neural networks, see \cite{jacot2018neural,arora2019fine,allen2019convergence,arora2019exact,weinan2019comparative}. 

\paragraph{Barron space. }Barron space is introduced to study the overparameterized neural networks as well but with a different scaling. We consider the two-layer ReLU neural network with mean-field scaling:
\begin{equation}
     f(x;\theta) = \frac{1}{m}\sum_{i=1}^ma_i\sigma(\omega_i\transpose x).
\end{equation}
The function in the Barron space serves as the continuous analog of the two-layer neural network as the width $m$ goes to infinity:
\begin{equation}\label{def_Barron_function}
   f(x) =  \int_{\bR^d}a(\omega)\sigma(\omega\transpose x)\rmd \rho(\omega).
\end{equation}
The Barron space is defined as follows
\begin{align}
    \mathcal{B} = \bigg\{ f(x) = \int_{\bR^d}a(\omega)\sigma(\omega\transpose x)\rmd\rho(\omega),~\rho \in \mathcal{P}(\bR^d),
    \text{ and } \int_{\bR^d}|a(\omega)|\rmd \rho(\omega) < +\infty \bigg\}.
\end{align}
Due to the scaling invariance of the ReLU function, the norm of the Barron space can be defined by
\begin{equation}
    \|f\|_\mathcal{B} = \inf_{a,\rho} \int_{\mathbb{S}^{d-1}}|a(\omega)|\rmd \rho(\omega),
\end{equation}
where the infimum is taken over all possible $\rho \in \mathcal{P}(\mathbb{S}^{d-1})$ and $a \in L^1(\rho)$  such that Eq.~\eqref{def_Barron_function} is satisfied. We have the following relationship between the RKHS and Barron space:
\begin{equation}\label{barron_rkhs_relation}
    \mathcal{B} =  \mathop{\cup}_{\pi \in \mathcal{P}(\mathbb{S}^{d-1})} \mathcal{H}_{k_\pi},
\end{equation}
where $k_\pi(x,y) = \EE_{\omega\sim\pi} [\sigma(\omega\transpose x)\sigma(\omega\transpose y)]$. We shall also point out that compared to the RKHS, the Barron space is much larger in high dimensions, see Example 4.3 in \cite{weinan2021kolmogorov}. We refer to \cite{ma2019barron,ma2022barron} for more details and properties of the Barron space. For a detailed comparison between the NTK and the Barron space, see \cite{weinan2019comparative,ma2020towards}.

We can approximate the target function in Barron space with the two-layer neural networks:
\begin{equation}
    \mathcal{H}_m = \{f(x,\theta) = \frac{1}{m}\sum_{i=1}^m a_i \sigma(\omega_i\transpose x): \theta = (a_1,\omega_1,\dots,a_m,\omega_m), a_i \in \bR, \omega_i \in \mathbb{S}^{d-1}, 1 \le i \le m\}.
\end{equation}
Then we have that for any $f^* \in \mathcal{B}$ and $\mu \in \mathcal{P}(\bR^d)$:
\begin{equation}
    \inf_{f_m \in \mathcal{H}_m}\EE_{x \sim \mu}|f^*(x) - f_m(x)|^2 \le \frac{\|f\|_\mathcal{B}^2}{m}.
\end{equation}

Similar to the kernel ridge regression, we can define the following loss function:
\begin{equation}
    \mathcal{L}_n(\theta) = \frac{1}{n}\sum_{j=1}^n \max\{(\ln n)^2, [y_i - f(x_i,\theta)]^2\} + \frac{\lambda}{m}\sum_{i=1}^m|a_i|\|\omega_i\|
\end{equation}
to obtain the estimator $\hat{f}$. Under proper condition, we can obtain that 
\begin{equation}
    \|f^*-\hat{f}\|_{L^2(\mu)} = \tilde{O}(\|f\|_\mathcal{B}[m^{-\frac{1}{2}} + n^{-\frac{1}{4}}]),
\end{equation}
 if we set $\lambda = \tilde{O}(n^{-\frac{1}{2}})$, see \cite{ma2018priori} for details. However, it is still not clear how to efficiently compute the estimator $\hat{f}$, see, e.g., \cite{weinan2019comparative,ma2020towards}. More work is needed to further understand the optimization error in this case.

\subsection{Reinforcement Learning with Nonlinear Function Approximation}

We first discuss the approximation error in the nonlinear setting. To this end, we generalize Theorem \ref{thm:lin_mdp} to the case of RKHS.
\begin{thm}\label{thm:rkhs}
    Assume that $\bS\times\bA$ is a compact set. Let $\rho$  be a probability distribution on $\bS \times \bA$, we will use $\{\lambda_i\}_{i=1}^{+\infty}$ and $\{\psi_i\}_{i=1}^{+\infty}$ to denote the eigenvalues and eigenfunctions of the operator
\begin{equation*}
    (\mathcal{K}_{\rho}g) (x) \coloneqq \int_{\bS \times \bA}k(x,x')g(x')\rmd \rho(z') 
\end{equation*}
from $L^2(\rho)$ to $L^2(\rho)$. We further require that $\{\lambda_i\}_{i=1}^{+\infty}$ is nonincreasing and $\{\psi_i\}_{i=1}^{+\infty}$ is orthonormal in $L^2(\rho)$. 
Then, the following statements are equivalent.
\begin{enumerate}
    \item For any $h \in [H]$, $r(h,\cdot) \in \mathcal{H}_k$ and there exist signed measures $\{\mu_h^i\}_{i=1}^{+\infty}$ over $\bS$, such that for any $(s,a)$
    \begin{equation}
        P(\,\cdot\cond h,s,a) = \sum_{i=1}^{+\infty}\psi_i(s,a) \mu_h^i(\cdot)
    \end{equation}
    in the sense that for any $f \in C(\bS)$
    \begin{equation}
        \int_{\bS}f(s')\rmd P(s'|h,s,a) = \sum_{i=1}^{+\infty}\int_{\bS}f(s')\rmd \mu_h^i(s') \psi_i(s,a),
    \end{equation}
    where the convergence is in the sense of $L^2(\rho)$. Moreover, for any $f \in C(\bS)$
    \begin{equation}
        \sum_{i=1}^{+\infty}\frac{1}{\lambda_i}|\int_{\bS} f(s') \rmd \mu_h^i(s')|^2 \le C\| f \|_{C(\bS)}^2,
    \end{equation}
    for a constant $C > 0$.
    \item For any $f \in C(\bS)$ and $h \in [H]$,
    \begin{equation}
        (\mathcal{T}_h f)(s,a) = r(h,s,a) + \EE_{s' \sim P(\,\cdot\cond h,s,a)}[f(s')] \in \mathcal{H}_k
    \end{equation}
    and $\|\mathcal{T}_h f\|_{\mathcal{H}_k} \le C'[\|f\|_{C(\bS)} + 1]$ for a constant $C' > 0$.
\end{enumerate}

\end{thm}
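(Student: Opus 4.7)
The proof mirrors that of Theorem~\ref{thm:lin_mdp}, with the finite feature map $\bm \phi$ replaced by the Mercer eigenbasis $\{\psi_i\}_{i \ge 1}$. The key tool is Mercer's theorem: since $k$ is continuous and $\bS \times \bA$ is compact, one has the expansion $k(x,x') = \sum_i \lambda_i \psi_i(x)\psi_i(x')$ together with the characterization $\mathcal{H}_k = \{g = \sum_i c_i \psi_i : \sum_i c_i^2/\lambda_i < \infty\}$ and $\|g\|_{\mathcal{H}_k}^2 = \sum_i c_i^2/\lambda_i$. Under this lens, the coefficient condition in Statement~1 is exactly the assertion that the ``transition part'' of $\mathcal{T}_h f$ lies in $\mathcal{H}_k$ with controlled norm, which matches what Statement~2 requires.

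For the direction $1 \Rightarrow 2$, I would plug the assumed expansion of $P$ into $\mathcal{T}_h f$ to obtain
\begin{equation*}
    (\mathcal{T}_h f)(s,a) = r(h,s,a) + \sum_{i=1}^{\infty}\Big(\int_\bS f \rmd \mu_h^i\Big)\psi_i(s,a).
\end{equation*}
Since $r(h,\cdot) \in \mathcal{H}_k$ and the series lies in $\mathcal{H}_k$ with squared norm $\sum_i |\int f \rmd \mu_h^i|^2/\lambda_i \le C\|f\|_{C(\bS)}^2$ by the Mercer characterization, a triangle inequality yields $\|\mathcal{T}_h f\|_{\mathcal{H}_k} \le C'(\|f\|_{C(\bS)} + 1)$ with $C'$ depending on $C$ and $\|r(h,\cdot)\|_{\mathcal{H}_k}$. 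This direction is essentially a direct computation.

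For the converse $2 \Rightarrow 1$, I would first set $f \equiv 0$ to extract $r(h,\cdot) = \mathcal{T}_h 0 \in \mathcal{H}_k$, then define the linear operator $L \colon C(\bS) \to \mathcal{H}_k$ by $Lf = \mathcal{T}_h f - r(h,\cdot)$. The assumed bound together with the linearity of $L$ (which lets one absorb the ``$+1$'' term by rescaling a unit-norm $f$) gives $\|Lf\|_{\mathcal{H}_k} \le C''\|f\|_{C(\bS)}$. Expanding $Lf = \sum_i c_i(f)\psi_i$ with $c_i(f) = \langle Lf, \psi_i \rangle_{L^2(\rho)}$, each $c_i$ is linear and bounded on $C(\bS)$ via the continuous embedding $\mathcal{H}_k \hookrightarrow L^2(\rho)$ (which gives $|c_i(f)| \le \sqrt{\lambda_1}\|Lf\|_{\mathcal{H}_k}$). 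Since $\bS$ is compact as the projection of the compact set $\bS \times \bA$, the Riesz representation theorem on $C(\bS)$ produces signed Borel measures $\mu_h^i$ satisfying $c_i(f) = \int_\bS f \rmd \mu_h^i$. Parseval in $\mathcal{H}_k$ then delivers $\sum_i |c_i(f)|^2/\lambda_i = \|Lf\|_{\mathcal{H}_k}^2 \le (C'')^2\|f\|_{C(\bS)}^2$, matching the coefficient bound of Statement~1.

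The main technical obstacle, compared with the finite-dimensional case of Theorem~\ref{thm:lin_mdp}, is simply bookkeeping the convergence modes: the representation $\int f \rmd P(\,\cdot\cond h,s,a) = \sum_i (\int f \rmd \mu_h^i) \psi_i(s,a)$ is required to hold in $L^2(\rho)$, which follows from the stronger $\mathcal{H}_k$-convergence via the continuous embedding. Pleasantly, the orthogonality of the Mercer basis removes the nullspace/orthogonal-projection step that was needed in the linear argument, so the delicate work really reduces to justifying Mercer's expansion and invoking Riesz for each eigen-coefficient.
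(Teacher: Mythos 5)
Your proposal is correct and follows essentially the same route as the paper's proof: both directions rest on the Mercer characterization $\|g\|_{\mathcal{H}_k}^2 = \sum_i \lambda_i^{-1}|\langle g,\psi_i\rangle_{L^2(\rho)}|^2$, the forward direction is the same direct computation plus triangle inequality, and the converse uses the same steps of setting $f=0$, exploiting linearity to absorb the additive constant, bounding each eigen-coefficient as a linear functional on $C(\bS)$ via the embedding $\mathcal{H}_k \hookrightarrow L^2(\rho)$, and invoking the Riesz representation theorem to produce the signed measures $\mu_h^i$. No gaps; the details you flag (convergence modes, compactness of $\bS$ for Riesz) are handled the same way in the paper.
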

\begin{proof}
We will use the following representation of the RKHS norm: 
\begin{equation}\label{mercer_norm}
    \|g\|_k^2 = \sum_{i=1}^{+\infty}\frac{1}{\lambda_i}|\langle g, \psi_i\rangle_{L^2(\rho)}|^2.
\end{equation}
See, e.g., \cite[Section~2.1]{bach2017equivalence}.

$1 \Rightarrow 2$: For any $f \in C(\bS)$, 
\begin{equation}
    (\mathcal{T}_h f)(s,a) = r(h,s,a) + \sum_{i=1}^{+\infty} \int_{\bS} f(s')\rmd \mu_h^i(s') \psi_i(s,a). 
\end{equation}
Let 
\begin{equation}
    g_{f,h} = \sum_{i=1}^{+\infty} \int_{\bS} f(s')\rmd \mu_h^i(s') \psi_i,
\end{equation}
then $\langle g_{f,h}, \psi_i\rangle_{L^2(\rho)} = \int_{\bS} f(s') \rmd \mu_h^i(s')$ for any $i \in \mathbb{N}^+$. Therefore $g_{f,h} \in \mathcal{H}_k$ and 
\begin{equation}
    \|g_{f,h}\|_{\mathcal{H}_k} = \sqrt{\sum_{i=1}^{+\infty}\frac{1}{\lambda_i}|\int_{\bS} f(s')\rmd \mu_i(s')|^2} \le \sqrt{C}\|f\|_{C(\bS)}.
\end{equation}
Hence
\begin{equation}
    \|\mathcal{T}_h f\|_{\mathcal{H}_k} \le \|r(h,\cdot)\|_{\mathcal{H}_k} + \|g_{f,h}\|_{\mathcal{H}_k} \le C'[1+ \|f\|_{C(\bS)}].
\end{equation}

$2 \Rightarrow 1$: Choose $f = 0$ we know that $r(h,\cdot) \in \mathcal{H}_k$ and $\|r(h,\cdot)\|_{\mathcal{H}_k} \le C'$. Let
\begin{equation}
    g_{f,h} = \EE_{s'\sim P(\,\cdot\cond h,s,a)}[f(s')],
\end{equation}
then for any $f \in C(\bS)$, $g_{f,h} \in \mathcal{H}_k$. If $\|f\|_{C(\bS)} = 1$, we have
\begin{equation}
    \|g_{f,h}\|_{\mathcal{H}_k} \le \|\mathcal{T}_h f\|_{\mathcal{H}_k} + \|r(h,\cdot)\|_{\mathcal{H}_k} \le 3C'.
\end{equation}
Then by the linearity of $g_{f,h}$ with respect to $f$, we have that for any $f \in C(\bS)$,
\begin{equation}
    \|g_{f,h}\|_{\mathcal{H}_k} \le 3C'\|f\|_{C(\bS)}.
\end{equation}
Noticing that $g_{f,h} \in \mathcal{H}_k \subset L^2(\rho)$, we know that for any $f \in C(\bS)$,
\begin{equation}
   \int_{\bS} f(s')\rmd P(s'|h,s,a) = g_{f,h} = \sum_{i=1}^{+\infty} \int_{\bS\times\bA}g_{f,h}(s',a') \psi_i(s',a')\rmd \rho(s',a') \psi_i(s,a).
\end{equation}
Noticing that
\begin{equation}
    |\int_{\bS\times\bA}g_{f,h}(s',a') \psi_i(s',a')\rmd \rho(s',a')|\le \|g_{f,h}\|_{L^2(\rho)} \le \sqrt{\lambda_1}\|g_{f,h}\|_{\mathcal{H}_k} \le 3C'\sqrt{\lambda_1}\|f\|_{C(\bS)}. 
\end{equation}
We can then use the Riesz representation theorem to obtain that there exists signed measures $\{\mu_h^i\}_{i=1}^{+\infty}$ such that
\begin{equation}
    \int_{\bS}g_{f,h}(s',a') \psi_i(s',a')\rmd \rho(s',a') = \int_{\bS} f(s')\rmd \mu_h^i(s').
\end{equation}
Therefore,
\begin{equation}
    P(\,\cdot\cond h,s,a) = \sum_{i=1}^{+\infty}\mu_h^i(\cdot)\psi_i(s,a).
\end{equation}
Moreover,
\begin{equation}
    \|g_{f,h}\|_{\mathcal{H}_k}^2 = \sum_{i=1}^{+\infty}\frac{1}{\lambda_i}|\int_{\bS}f(s')\rmd \mu_h^i(s')|^2  \le 9(C')^2\|f\|_{C(\bS)}^2 := C\|f\|_{C(\bS)}^2.
\end{equation}
\end{proof}

Similar to the discussions in the linear setting, Theorem \ref{thm:rkhs} provides a necessary and sufficient condition to ensure \ref{item:approx_Q4} in the RKHS setting. The extension of Theorem \ref{thm:rkhs} to the Barron space is not yet clear, mainly due to the lack of a representation of the Barron norm that is similar to \eqref{mercer_norm}. 
Nevertheless, it is still worthwhile to consider only sufficient conditions for \ref{item:approx_Q4} in Barron space. \cite{long20212} introduce one such condition:
for any $h \in [H]$,
\begin{equation}\label{barron_approx}
    \|r(h,\cdot)\|_{\mathcal{B}} < +\infty,\quad \sup_{s' \in \bS}\|p(h,s',\cdot)\|_\mathcal{B} < +\infty,
\end{equation}
where $p(h,s',s,a) = \frac{\rmd P(s'\cond h,s,a)}{\rmd \rho_h(s')}$ and $\rho_h$ is a probability distribution on $\bS$. Similar to the situation in the linear setting, these conditions rule out many interesting deterministic MDPs, since these MDPs can only visit countably infinite states when $h \ge 2$. Additionally, there has been little investigation into the concrete conditions on the transition probability and reward function that ensure \ref{item:approx_Q2} and \ref{item:approx_Q3} in Section \ref{sec:appro} in the nonlinear setting.

With theoretical results in supervised learning and analysis on the approximation error of RL in the nonlinear setting, it remains to develop tools to handle the distribution mismatch phenomenon in the nonlinear setting, which leads to a paramount difference in RL analysis between tabular/linear settings and nonlinear settings. In the tabular and linear settings, the $L^\infty$ estimation and UCB estimation are used to handle the distribution mismatch. The $L^\infty$ estimation or UCB estimation 
of the value function is obtained such that the error under any distribution can be controlled. However, as pointed out in \cite{kuo2008multivariate}, \cite{long20212} and the theorem below, both $L^\infty$ and UCB estimations will suffer from the curse of dimensionality for high-dimensional NTK, Barron space, and many common RKHSs.  This challenge reveals at least one essential difficulty of RL problems in the nonlinear setting compared to the tabular and linear settings.

\begin{thm}
\label{thm:Linfinity}
Given an RKHS $\mathcal{H}_k$ on $\mathcal{X}$ associated with a continuous kernel $k$ (assuming that $\sup_{x \in \mathcal{X}}k(x,x) \le 1$) and any $x_1,\dots,x_n \in \mathcal{X}$, let $\mathcal{H}_k^1$ be the unit ball of $\mathcal{H}_k$ and $\mathcal{G}_n: \mathcal{H}_k^1 \rightarrow C(\mathcal{X})$ be a mapping satisfying
\begin{equation}
    \mathcal{G}_n f = \mathcal{G}_n f',\, \forall f, f' \in \mathcal{H}_k^1 \text{ such that } f(x_i) = f'(x_i), i=1,\dots,n.
\end{equation}
For any given distribution $\rho$ on $\mathcal{X}$, let $\{\lambda_i\}_{i=1}^
{+\infty}$ be the nonincreasing eigenvalues of the mapping $K_\rho: L^2(\rho) \rightarrow L^2(\rho)$:
\begin{equation}
    (K_\rho g)(x) = \int_{\mathcal{X}}k(x,x') g(x') \rmd \rho(x').
\end{equation}
The following two statements hold true.
\begin{enumerate}
    \item{($L^\infty$ estimation)} $\displaystyle \sup_{f \in \mathcal{H}_k^1}\|f - \mathcal{G}_n f\|_\infty \ge (\sum_{i=n+1}^{+\infty} \lambda_i)^{\frac{1}{2}}$.
    \item{(UCB estimation)} If $\mathcal{G}_n$ additionally satisfies that 
    \begin{equation}\label{ucb_condition}
    \mathcal{G}_n f(x) \ge f(x), \,\forall f \in \mathcal{H}_k^1 \text{ and } x \in \mathcal{X},
    \end{equation}
    then,
    \begin{equation}
       \sup_{f \in \mathcal{H}_k^1} \EE_{\rho}[\mathcal{G}_n f - f] \ge \sum_{i=n+1}^{+\infty}\lambda_i. 
    \end{equation}
\end{enumerate}
\end{thm}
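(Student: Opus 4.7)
The plan is to exploit the defining invariance of $\mathcal{G}_n$: for any $f \in \mathcal{H}_k^1$ vanishing at $x_1, \ldots, x_n$, the hypothesis forces $\mathcal{G}_n f = \mathcal{G}_n (-f) = \mathcal{G}_n 0$. Set $V := \{g \in \mathcal{H}_k : g(x_i) = 0,\, i=1,\ldots,n\}$, let $P_\perp$ denote the orthogonal projection of $\mathcal{H}_k$ onto $V$, and define the residual representer $\phi_x := P_\perp k(x, \cdot) \in V$. Since $f(x) = \langle f, \phi_x\rangle_{\mathcal{H}_k}$ for every $f \in V$, Cauchy-Schwarz gives $\sup_{f \in V \cap \mathcal{H}_k^1} f(x) = \|\phi_x\|_{\mathcal{H}_k}$.

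For the $L^\infty$ part, for any $f \in V \cap \mathcal{H}_k^1$, applying the triangle inequality to $\|f - \mathcal{G}_n 0\|_\infty + \|-f - \mathcal{G}_n 0\|_\infty \ge \|2f\|_\infty$ shows that $\sup_{g \in \mathcal{H}_k^1}\|g - \mathcal{G}_n g\|_\infty \ge \|f\|_\infty$. Taking the sup over such $f$ and then dominating the pointwise supremum over $x$ by the $\rho$-average reduces the problem to $\sup_{g}\|g - \mathcal{G}_n g\|_\infty^2 \ge \int \|\phi_x\|_{\mathcal{H}_k}^2\,\rmd\rho$. For the UCB part, I would simply take $f = 0$ and use condition \eqref{ucb_condition} for every $f \in V \cap \mathcal{H}_k^1$ to conclude $\mathcal{G}_n 0(x) = \mathcal{G}_n f(x) \ge f(x)$; taking the sup over such $f$ gives $\mathcal{G}_n 0(x) \ge \|\phi_x\|_{\mathcal{H}_k}$. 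Since $\|\phi_x\|_{\mathcal{H}_k}^2 \le k(x,x) \le 1$, we have $\|\phi_x\|_{\mathcal{H}_k} \ge \|\phi_x\|_{\mathcal{H}_k}^2$, yielding $\sup_{f \in \mathcal{H}_k^1}\EE_\rho[\mathcal{G}_n f - f] \ge \EE_\rho[\mathcal{G}_n 0] \ge \int\|\phi_x\|_{\mathcal{H}_k}^2\,\rmd\rho$.

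Both parts therefore reduce to a single key estimate: $\int \|\phi_x\|_{\mathcal{H}_k}^2\,\rmd\rho \ge \sum_{i>n}\lambda_i$. To prove this, I would pass to the $\mathcal{H}_k$-side covariance operator $C_\rho : \mathcal{H}_k \to \mathcal{H}_k$ defined by $C_\rho f := \int f(x)\,k(x,\cdot)\,\rmd\rho(x)$, which satisfies $\langle C_\rho f, f\rangle_{\mathcal{H}_k} = \|f\|_{L^2(\rho)}^2$ and shares the nonzero spectrum $\{\lambda_i\}$ with $\mathcal{K}_\rho$, hence has trace $\sum_i\lambda_i$. For any orthonormal basis $\{g_j\}$ of $V$ in $\mathcal{H}_k$, expanding $\phi_x = \sum_j g_j(x)\,g_j$ inside $V$ gives $\int \|\phi_x\|_{\mathcal{H}_k}^2\,\rmd\rho = \sum_j \|g_j\|_{L^2(\rho)}^2 = \mathrm{tr}(P_\perp C_\rho P_\perp)$. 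Extending $\{g_j\}$ by an orthonormal basis of $V^\perp$ (which has dimension at most $n$) to an orthonormal basis of $\mathcal{H}_k$ yields $\mathrm{tr}(P_\perp C_\rho P_\perp) = \mathrm{tr}(C_\rho) - \mathrm{tr}((I-P_\perp) C_\rho (I-P_\perp))$; since $(I-P_\perp) C_\rho (I-P_\perp)$ has rank at most $n$ and its top eigenvalues are dominated by those of $C_\rho$ via the Courant-Fischer principle, $\mathrm{tr}((I-P_\perp) C_\rho (I-P_\perp)) \le \sum_{i=1}^n \lambda_i$, and the desired estimate follows.

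The principal technical obstacle I anticipate is the bookkeeping between the $\mathcal{H}_k$- and $L^2(\rho)$-inner products, specifically identifying the spectrum of the $\mathcal{H}_k$-side operator $C_\rho$ with the prescribed eigenvalues $\{\lambda_i\}$ of $\mathcal{K}_\rho$ on $L^2(\rho)$ and applying the minimax principle to the finite-rank compression $(I-P_\perp) C_\rho (I-P_\perp)$. Once this spectral picture is in place, all remaining inequalities are elementary.
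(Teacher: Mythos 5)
Your proposal is correct and follows essentially the same route as the paper's proof: the same symmetrization trick $\mathcal{G}_nf=\mathcal{G}_n(-f)$ for the $L^\infty$ bound, the same choice $f=0$ combined with $\|\phi_x\|_{\mathcal{H}_k}\ge\|\phi_x\|_{\mathcal{H}_k}^2$ for the UCB bound, and the same reduction to the key estimate $\int\|\phi_x\|_{\mathcal{H}_k}^2\rmd\rho\ge\sum_{i>n}\lambda_i$. The only cosmetic difference is that you establish the trace bound $\mathrm{tr}((I-P_\perp)C_\rho(I-P_\perp))\le\sum_{i=1}^n\lambda_i$ abstractly via Courant--Fischer for the rank-$n$ compression, whereas the paper derives the same inequality by an explicit computation with the coefficients $c_l=\sum_{i=1}^n(\EE_\rho[\psi_l\phi_i])^2$, using $0\le c_l\le\lambda_l$ and $\sum_l c_l/\lambda_l=n$.
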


We can interpret the mapping $\mathcal{G}_n$ in Theorem~\ref{thm:Linfinity} as an abstraction of a function approximation algorithm that takes the function values at $n$ points, $x_1,\dots,x_n$ as input and returns a continuous function. The requirement in Theorem~\ref{thm:Linfinity} is naturally satisfied: if two target functions have the same values at $x_1,\dots,x_n$, the function approximation result will be identical. From the definition, UCB estimation must satisfy condition \eqref{ucb_condition} as the UCB estimation should give a pointwise upper bound of the target function. Therefore, Theorem \ref{thm:Linfinity} gives the lower bound of the worst-case error of both $L^\infty$ and UCB estimation based on the eigenvalue decay of the kernel. As pointed out in~\cite{long20212},
if we choose $\mathcal{X} = \mathbb{S}^{d-1}$, the unit ball in $\bR^d$ and $\rho$ the uniform distribution on $\mathbb{S}^{d-1}$, the eigenvalue decay $\sum_{i=n+1}^{+\infty}\lambda_i$ of the following RKHSs: 
\begin{equation}
    k(x,x') = \begin{cases} k_{Lap}(x,x') = \exp(-\|x - x'\|)  \\
    k_{NTK}(x,x') = \EE_{\omega \sim \pi}( x \cdot x')\sigma'(\omega \cdot x)\sigma'(\omega \cdot x')  \\
    k_\pi(x,x') = \EE_{\omega \sim \pi}\sigma(\omega \cdot x)\sigma(\omega\cdot x')
    \end{cases}
\end{equation}
is $n^{-\frac{\alpha}{d}}$ for some universal constant $\alpha$. Here $\pi$ is also the uniform distribution on $\mathbb{S}^{d-1}$. Therefore, if the target function lies in the RKHS associated with the Laplacian kernel or NTK, according to the first argument in Theorem~\ref{thm:Linfinity}, the $L^\infty$ estimation suffers from the curse of dimensionality: the number of points needed to achieve an error tolerance scales exponentially with respect to the dimension $d$.
Since the $\mathcal{H}_{k_\pi}$ is the subspace of the Barron space $\mathcal{B}$ (equation \eqref{barron_rkhs_relation}), the $L^\infty$ and UCB estimation in the Barron space also suffer from the curse of dimensionality. 

\paragraph{}\noindent
\begin{proof}(of Theorem \ref{thm:Linfinity})
We first prove that
\begin{equation}\label{eigen_control_average}
    \EE_{x\sim\rho}\sup_{f \in \mathcal{H}_k^1, f(x_1) = \dots = f(x_n) = 0}|f(x)|^2 \ge \sum_{l = n+1}^{+\infty}\lambda_l.
\end{equation}

 Notice that
\begin{align}
    \sup_{f \in \mathcal{H}_k^1, f(x_1) = \dots = f(x_n) = 0}f(x) &= \sup_{\|f\|_\mathcal{H} \le 1, \langle f,k(x_i,\cdot)\rangle_{\mathcal{H}_k} = 0, 1\le i \le n}\langle f, k(x,\cdot)\rangle_{\mathcal{H}_k}\\
    &=\inf_{c_1,\dots,c_n}\|k(x,\cdot)-\sum_{i=1}^nc_ik(x_i,\cdot)\|_{\mathcal{H}_k}.
\end{align}
Then, let $\phi_1,\dots,\phi_n$ be the Gram-Schmidt orthonormalization of $\{k(x_1,\cdot),\dots,k(x_n,\cdot)\}$ in $\mathcal{H}$, then
\begin{equation}
    \inf_{c_1,\dots,c_n}\|k(x,\cdot)-\sum_{i=1}^nc_ik(x_i,\cdot)\|_{\mathcal{H}_k}^2 =  k(x,x) -\sum_{i=1}^n\phi_i^2(x).
\end{equation}
Therefore,
\begin{align}
    \EE_{x\sim\rho}\sup_{f \in \mathcal{H}_k^1, f(x_1) = \dots = f(x_n) = 0}|f(x)|^2 = \EE_{x\sim\rho}k(x,x) - \sum_{i=1}^n\EE_{x\sim\rho}\phi_i^2(x).
\end{align}
Let $\{\psi_l\}_{l=1}^{+\infty}$ be the eigenfunctions corresponding to $\{\lambda_l\}_{l=1}^{+\infty}$, which is an orthonormal basis in $L^2(\rho)$. Let $c_l = \sum_{i=1}^n(\EE_{x\sim\rho}\psi_l(x)\phi_i(x))^2$,
then,
\begin{equation}
   \lambda_l = \lambda_l^2\|\psi_l\|_{\mathcal{H}_k}^2 \ge \lambda_l^2\sum_{i=1}^n(\langle \psi_l,\phi_i\rangle_{\mathcal{H}_k})^2 = \lambda_l^2\sum_{i=1}^n\frac{(\EE_{x\sim\rho}\psi_l(x)\phi_i(x))^2}{\lambda_l^2} = c_l \ge 0,
\end{equation}
and
\begin{equation}
    \sum_{l=1}^{+\infty}\frac{c_l}{\lambda_l} = \sum_{i=1}^n\sum_{l=1}^{+\infty}\frac{(\EE_{x\sim\rho}\psi_l(x)\phi_i(x))^2}{\lambda_l} = \sum_{i=1}^n\|\phi_i\|_\mathcal{H}^2 = n.
\end{equation}
Hence,
\begin{equation}
    \sum_{i=1}^n\EE_{x\sim\rho}\phi_i^2(x) = \sum_{l=1}^{+\infty}c_l \le \sum_{l=1}^n\lambda_l.
\end{equation}
The famous Mercer decomposition states that
\begin{equation}\label{mercer_decompo}
    k(x,x') = \sum_{i=1}^{+\infty}\lambda_i \psi_i(x)\psi_i(x').
\end{equation}
Therefore, with the observation that
\begin{equation}
    \EE_{x\sim\rho}k(x,x) = \sum_{l=1}^{+\infty}\lambda_l,
\end{equation}
we obtain inequality \eqref{eigen_control_average}.

To prove the first argument, first notice that
\begin{align}
      &\sup_{f \in \mathcal{H}_k^1, f(x_1) = \dots=f(x_n)=0}\|f\|_\infty 
    = \sup_{x \in \mathcal{X}} \sup_{f \in \mathcal{H}_k^1, f(x_1) = \dots=f(x_n)=0}|f(x)| \\
    \ge  &(\EE_{x\sim\rho}\sup_{f \in \mathcal{H}_k^1, f(x_1) = \dots = f(x_n) = 0}|f(x)|^2)^{\frac{1}{2}} \ge (\sum_{l = n+1}^{+\infty}\lambda_l)^{\frac{1}{2}}.
\end{align}
Then, noticing that for any $f\in \mathcal{H}_k^1$ such that $f(x_1) = \dots = f(x_n) =0 $, we have $\mathcal{G}_n f = \mathcal{G}_n(-f)$. Therefore
\begin{align}
    \sup_{f \in \mathcal{H}_k^1}\|f - \mathcal{G}_n f\|_\infty &= \sup_{f \in \mathcal{H}_k^1, f(x_1) = \dots=f(x_n)=0}\frac{\|f-\mathcal{G}_nf\|_\infty + \|-f -\mathcal{G}_nf\|_\infty}{2}\\
    &\ge \sup_{f \in \mathcal{H}_k^1, f(x_1) = \dots=f(x_n)=0}\|f\|_\infty,
\end{align}
which concludes the proof.

For the second argument, let $f_0 = 0$. For any $f \in \mathcal{H}_k^1$ such that $f(x_1) = \dots = f(x_n) = 0$, we have
\begin{equation}
    \mathcal{G}_n f_0(x) = \mathcal{G}_n f(x) \ge f(x),
\end{equation}
for any $x \in \mathcal{X}$. Therefore,
\begin{equation}
    \mathcal{G}_nf_0(x) \ge \sup_{f\in\mathcal{H}_k^1,f(x_1)=\dots = f(x_n)=0}f(x) = \sup_{f\in\mathcal{H}_k^1,f(x_1)=\dots = f(x_n)=0}|f(x)|.
\end{equation}
Combining the fact that
\begin{equation}
    \sup_{f \in \mathcal{H}_k^1}|f(x)| = \sup_{f \in \mathcal{H}_k^1}|\langle f,k(x,\cdot)\rangle_{\mathcal{H}_k}| = \|k(x,\cdot)\|_{\mathcal{H}_k} = k(x,x) \le 1,
\end{equation}
we know that 
\begin{equation}
    \mathcal{G}_n f_0(x) \ge \sup_{f\in\mathcal{H}_k^1,f(x_1)=\dots = f(x_n)=0}|f(x)|^2.
\end{equation}
Therefore,
\begin{equation}
    \sup_{f\in\mathcal{H}_k^1}\EE_{x\sim\rho}[\mathcal{G}_nf - f] \ge \EE_{x\sim\rho} \mathcal{G}_n f_0(x) \ge \EE_{x\sim\rho}\sup_{f\in\mathcal{H}_k^1,f(x_1)=\dots = f(x_n)=0}|f(x)|^2 \ge \sum_{i=n+1}^{+\infty}\lambda_i.
\end{equation}
\end{proof}

Theorem~\ref{thm:Linfinity} indicates that the $L^\infty$ or UCB estimation is too strong as a requirement to pursue in the high-dimensional cases with nonlinear function approximation. Therefore, to obtain meaningful results in the nonlinear setting, besides the assumption that ensures the value or policy function can be approximated by the kernel or neural functions, some additional assumptions are needed to handle the distribution mismatch phenomenon. Based on the assumptions used, most of the existing works addressing this difficulty can be divided into two categories. The first category \cite{yang2020provably,yang2020function} assumes the fast eigenvalue decay of the kernel such that the $L^\infty$ and UCB estimation still provide a meaningful bound in high dimensions. The second category \cite{farahmand2016regularized,chen2019information,wang2019neural,fan2020theoretical,agarwal2021theory,long20212} requires the following concentration coefficient condition: for any $h \in [H]$, there exists a distribution $\nu_h$ such that for any policy $\pi$, the corresponding state-action distribution $\rho_{h,P,\pi,\mu}$ satisfies
\begin{equation}
    \|\frac{\rmd \rho_{h,P,\pi,\mu}}{\rmd \nu_h}\|_{L^2(\nu_h)} \le C,
\end{equation}
where $C > 0$ is a universal constant. Under this assumption, an $L^2$ estimation under $\nu_h$ is sufficient to handle distribution mismatch since we can control the estimation error under the state-action distributions generated by all possible policies, including the optimal policy. This assumption is commonly used to study the convergence of the fitted Q-iteration algorithm (Algorithm \ref{alg_fqi})  \cite{farahmand2016regularized,chen2019information,fan2020theoretical,long20212}.
In the episodic setting, due to the lack of a generative model, we need to additionally assume that $\nu_h$ is the state-action distribution $\rho_{h,P,\bar{\pi},\mu}$ for a policy $\bar{\pi}$ \cite{wang2019neural}.

To better capture the influence of distribution mismatch in the RL problem, \cite{long2022perturbational} introduce a quantity called \textit{perturbational complexity by distribution mismatch} for a large class of the RL problems in the nonlinear setting when a generative model is accessible. This quantity can give both the lower bound and upper bound of the sample complexity of these RL problems and hence measure their difficulty. Moreover, both fast eigenvalue decay and finite concentration coefficient can lead to small perturbational complexity by distribution mismatch \cite[Proposition~2 and 3]{long2022perturbational} and hence the results in \cite{long2022perturbational} generalize both categories of the previous results in the nonlinear setting. 

The formal definition of the perturbational complexity by distribution mismatch is given as follows.
\begin{defn}~
\begin{enumerate}[label={\upshape(\roman*)}, widest=iii]
\item 
For any set $\Pi$ consisting of probability distributions on $\bS\times\bA$, we define a semi-norm $\|\cdot\|_\Pi$ on $C(\bS\times\bA)$:
\begin{equation*}
    \|g\|_\Pi \coloneqq \sup_{\rho \in \Pi}|\int_{\bS\times\bA} g(s,a)\rmd\rho(s,a)|.
\end{equation*}
We call this semi-norm {\boldmath\emph{$\Pi$-norm}}.  

\item 
Given a Banach space, a positive constant $\epsilon > 0$ and a probability distribution $\nu \in \mathcal{P}(\bS\times\bA)$, we define $\mathcal{B}_{\epsilon,\nu}$, a {\boldmath\emph{$\nu$-perturbation space with scale $\epsilon$}}, as follows:
\begin{equation*}
    \mathcal{B}_{\epsilon,\nu} \coloneqq \{g \in \mathcal{B}\colon \|g\|_\mathcal{B} \le 1, \|g\|_{L^2(\nu)} \le \epsilon\}.
\end{equation*}

\item
The \emph{perturbation response by distribution mismatch} is defined as the radius of $\mathcal{B}_{\epsilon,\nu}$ under $\Pi$-norm,
\begin{equation*}
    \mathcal{R}(\Pi,\mathcal{B},\epsilon,\nu) \coloneqq \sup_{g \in \mathcal{B}_{\epsilon,\nu}} \|g\|_\Pi.
\end{equation*}
\end{enumerate}
\end{defn}

We consider an RL problem whose underlying MDP belongs to a family of MDPs 
\begin{equation*}
    \mathcal{M}= \{M_\theta = (\bS, \bA, P, r_\theta, H, \mu)\colon \theta \in \Theta\}
\end{equation*}
where $\bS$,  $\bA$, $P$, $H$ and $\mu$ are common state space, action space, transition probability\footnote{In \cite{long2022perturbational}, the general case where the transition probability is unknown is also considered and fitted Q-iteration algorithm (Algorithm \ref{alg_fqi}) with kernel function approximation is studied in this case. Here for brevity we only discuss the case where the transition probability is known.}, length of each episode, and initial distribution. The unknown reward function lies in the unit ball of a Banach space $\mathcal{B}$:
\begin{equation}
    \{r_\theta, \theta \in \Theta\} = \mathcal{B}^1.
\end{equation}
In the generative model setting, \cite{long2022perturbational} prove that any RL algorithm $J_n:\theta \rightarrow \bR$ on $\mathcal{M}$ with at most $n$ accesses to the generative model satisfies that
\begin{equation}
    \sup_{\theta \in \Theta} \EE|J_n(\theta) - J^*(M_\theta)| \ge \frac{1}{12}\Delta_\mathcal{M}(n^{-\frac{1}{2}}),
\end{equation}
where 
\begin{equation}
    \Delta_\mathcal{M}(\epsilon) = \inf_{\nu \in \mathcal{P}(\bS\times\bA)} \mathcal{R}(\Pi(P,\mu),\mathcal{B},\epsilon,\nu). 
\end{equation}
Therefore, the perturbational complexity by distribution mismatch gives a lower bound for RL problems on $\mathcal{M}$. Note that in \cite{long2022perturbational}, it is assumed that we can only obtain a noisy reward with a standard normal noise in the generative model, rather than the exact reward. On the other hand, if $\mathcal{B}$ is the Barron space or an RKHS, then the output $\hat{\pi}_\theta$ of Algorithm \ref{alg:FittedReward} satisfies:
\begin{equation}
\sup_{\theta\in\Theta}|J(M_\theta,\hat{\pi}_\theta) - J^*(M_\theta)| \le \tilde{O}(H\Delta_\mathcal{M}(n^{-\frac{1}{4}})).
\end{equation}
Therefore, the perturbational complexity by distribution mismatch also gives an upper bound for RL problems on $\mathcal{M}$.
\begin{algorithm}[ht]
\caption{Fitted reward algorithm}
{
\KwIn{MDP family $\mathcal{M}$, generative model of MDP $(\bS,\bA,H, P,r_\theta,\mu)$, sampling distribution $\hat{\nu} = \argmin_{\nu \in \mathcal{P}(\bS\times\bA)}\mathcal{R}(\Pi(P_0,\mu),\mathcal{H}_k,n^{-\frac{1}{4}},\nu).$}
}

\For{$h = 1,2,\dots,H$}{
Sample $(s_1,a_1),\dots,(s_n,a_n)$ i.i.d. from $\hat{\nu}$ \\
Sample $r_h^1,\dots,r_h^n$ from $\mathcal{N}(r_\theta(h,s_1,a_1),1),\dots,\mathcal{N}(r_\theta(h,s_n,a_n),1)$, respectively\\
Compute $\hat{r}_\theta(h,\cdot)$ as the minimizer of the optimization problem
\begin{equation}
\label{optimization_problem1}
    \min_{\|r\|_\mathcal{B} \le 1} \sum_{i=1}^{n}[r(s_i,a_i) - r_h^i]^2
\end{equation}
}
Collect the fitted reward function to form the MDP $(\bS,\bA,H, P,\hat{r}_\theta,\mu)$, of which both reward function and transition are known. Denote it as $\hat{M}_\theta$.

\KwOut{
$\hat{\pi}_\theta$ as the optimal policy of $\hat{M}_\theta$.
}
\label{alg:FittedReward}
\end{algorithm}

The perturbational complexity by distribution mismatch can also be used to construct various RL problems that suffer from the curse of dimensionality \cite{long2022perturbational}. The first example involves a state space $\bS$ consisting of a single point $s_0$, while the action space $\bA$ is $\mathbb{S}^{d-1}$ and $H = 1$. In this setting, the RL problem essentially aims to find the maximum value of the reward function lying in the unit ball of $\mathcal{B}$ based on the values of $n$ points. We can prove that when $\mathcal{B}$ is the Barron space and the RKHS corresponding to the Laplacian kernel and NTK, the convergence rate can be bounded below by the eigenvalue decay. Therefore, if we consider the RKHS corresponding to the Laplacian kernel or neural tangent kernel, the convergence rate suffers from the curse of dimensionality. We can then conclude that if we want to solve RL problems with high dimensional action space, we need to assume the decay of the eigenvalue is fast enough to break the curse of dimensionality. The other example involves a high-dimensional state space and finite action space. For any dimension $d \ge 2$, length of each episode $H \in \mathbb{N}^+$ and positive constant $\delta > 0$, we define an MDP family $\mathcal{M}_{d,H,\delta}$ as follows:
\begin{align*}
    &\mathcal{S} = \mathbb{S}^{d-1}, \quad \mathcal{A} = \{0,1\}, \quad H = H,  \quad \mu = \text{Uniform}_{\mathbb{S}^{d-1}},\\
    &\{r_{\theta_r}:\theta_r \in \Theta_r\} = \{r: \|r(h,\cdot)\|_{\mathcal{H}_k} \le 1, \forall h \in [H]\}, \\ &k((s,a),(s',a')) = \exp(-\|s-s'\|),  \quad 
    P(\,\cdot\,|\,h,s,a) = \delta_{T_{a,h} s}(\cdot), \\
    &T_{a,h} s = \begin{cases}
       (\phi_1,\dots,\phi_{h_d} + \delta,\dots,\phi_d), \text{ when } a = 0,\\
        (\phi_1,\dots,\phi_{h_d} - \delta,\dots,\phi_d), \text{ when } a = 1,
    \end{cases} 
\end{align*}
where $h_d = h\mod d$ and we use the spherical coordinates $(\phi_1,\dots,\phi_d)$ to denote the points on $\mathbb{S}^{d-1}$. Then we can show that there exist no universal constants $\alpha, \beta > 0$ and constant $C_d > 0$ only depending on $d$ such that
\begin{equation*}
   \sup_{\delta > 0}\Delta_{\mathcal{M}_{d,H,\delta}}(n^{-\frac{1}{2}}) \le C_d H^\alpha (\frac{1}{n})^\beta
\end{equation*}
holds for all $n, H \in \mathbb{N}^{+}$ and $d \ge 2$. Therefore, the above RL problems cannot be solved without the curse of dimensionality.

\section{Discussion and Conclusion}
\label{sec:conclusion}
In this paper, we review existing research on reinforcement learning with function approximation. The results in the tabular and linear settings are well-developed because methods such as $L^\infty$ and UCB estimation can be used to handle the phenomenon of distribution mismatch. When a generative model is available, the perturbational complexity by distribution mismatch can be used to measure the impact of distribution mismatch and assess the difficulty of reinforcement learning problems in the nonlinear setting. However, it remains unclear how to extend these results to the episodic setting, and it is still an open question how to use perturbational complexity information to guide the design of efficient reinforcement learning algorithms in practice.

Approximation error is also an important topic in RL, especially in the nonlinear setting. Apart from the Theorem \ref{thm:lin_mdp} for linear space, Theorem \ref{thm:rkhs} for RKHS, and the condition \eqref{barron_approx} for Barron space, there are limited results in this area, particularly for deterministic MDPs. We remark that the solution of the continuous-time Hamilton-Jacobi-Bellman equation, which is the value function of continuous-time MDPs, can be approximated by neural networks, see, e.g., \cite{hutzenthaler2022overcoming}. However, it is not clear whether this result can be applied to discrete-time MDPs.
Computational issues are another important topic in reinforcement learning, particularly for reinforcement learning with neural function approximation. The convergence of the gradient descent method of neural networks in the mean field regime is still not well-understood. We hope that further research will be conducted on these topics.

Finally, a significant gap exists between the current theory and practice of reinforcement learning, even in the absence of function approximation. The majority of theoretical results focus on algorithms that employ strategic exploration, such as the UCB method \cite{azar2017minimax,jin2018q,jin2020provably,yang2020function}. However, RL algorithms in practice often utilize the random exploration. Theoretical research suggests that, in the worst-case scenario, RL with random exploration exhibits exponential difficulty with respect to the horizon \cite{dann2022guarantees}, which does not accurately explain practical performance. While some theoretical studies \cite{liu2018simple,laidlaw2023bridging} have examined instance-based bounds by identifying specific RL problem properties that lead to better performance than the worst case when random exploration is employed, these properties do not fully account for the success of all practical RL problems, nor do they address function approximation. Furthermore, many practical techniques, such as reward shaping, experience replay, and pre-trained policies, have not been sufficiently explored in theoretical research to explain their positive impact on RL algorithm performance. It is essential for future research to bridge the gap between theory and practice, particularly in the presence of function approximation.

\bibliography{ref}
\bibliographystyle{plain}
\end{document}